\def\eqref#1{equation~\ref{#1}}
\def\1{\bm{1}}
\DeclareMathAlphabet{\mathsfit}{\encodingdefault}{\sfdefault}{m}{sl}
\SetMathAlphabet{\mathsfit}{bold}{\encodingdefault}{\sfdefault}{bx}{n}
\DeclareMathOperator*{\argmin}{arg\,min}
\newtheorem{theorem}{Theorem}[section]
\theoremstyle{definition}
\newtheorem{definition}{Definition}[section]
\newtheorem{remark}{Remark}[section]
\newcommand{\W}{\widetilde}
\newcommand{\etab}{{\mathbf{\eta}}}
\newcommand{\I}{\mathbf{I}}
\newcommand{\x}{\mathbf{x}}
\title{Implicit regularization in Heavy-ball momentum accelerated stochastic gradient descent}
\begin{document}

\author[1]{Avrajit Ghosh$^*$}
\author[1]{He Lyu$^*$}
\author[1]{Xitong Zhang}
\author[1]{Rongrong Wang}
\affil[1]{Computational Mathematics Science and Engineering \linebreak
Michigan State University}

\def\thefootnote{*}\footnotetext{Denotes equal contribution}

\maketitle

\begin{abstract}
It is well known that the finite step-size ($h$) in Gradient Descent (GD) implicitly regularizes solutions to flatter minima. A natural question to ask is \textit{"Does the momentum parameter $\beta$ play a role in implicit regularization in Heavy-ball (H.B) momentum accelerated gradient descent (GD+M)?"} To answer this question, first, we show that  the discrete H.B momentum update (GD+M) follows a continuous trajectory induced by a modified loss, which consists of an original loss and an implicit regularizer. Then, we show that this implicit regularizer for (GD+M) is stronger than that of (GD) by factor of $(\frac{1+\beta}{1-\beta})$, thus explaining why (GD+M) shows better generalization performance and higher test accuracy than (GD). Furthermore, we extend our analysis to the stochastic version of gradient descent with momentum (SGD+M) and characterize the continuous trajectory of the update of (SGD+M) in a pointwise sense. We explore the implicit regularization in (SGD+M) and (GD+M) through a series of experiments validating our theory. 
\end{abstract}
\vspace{-0.3in}
\section{Introduction}
\vspace{-0.1in}
Deep neural networks (NN) have led to huge empirical successes in recent years across a wide variety of tasks, ranging from computer vision, natural language processing, autonomous driving to medical imaging, astronomy and physics \citep{Bengio+chapter2007,Hinton06,goodfellow2016deep}. Most deep learning problems are in essence solving an over-parameterized, large-scale non-convex optimization problem. A mysterious phenomenon about NN that attracted much attention in the past few years is why NN generalizes so well. Indeed, even with extremely overparametrized model, NNs rarely show a sign of over-fitting \citep{neyshabur2017implicit}.
Thus far, studies along this line have successfully revealed many forms of implicit regularization that potentially lead to good generalization when gradient descent (GD) or stochastic gradient descent (SGD) algorithms are used for training, including norm penalty \citep{soudry2018implicit}, implicit gradient regularization \citep{barrett2020implicit}, and implicit Hessian regularization \citep{orvieto2022anticorrelated,orvieto2022explicit} through noise injection. 

 In contrast, the family of momentum accelerated gradient descent methods including Polyak's Heavy-ball momentum \citep{polyak1964some}, Nesterov's momentum \citep{sutskever2013importance}, RMSProp  \citep{tieleman2012lecture}, and Adam \citep{kingma2014adam},  albeit being  powerful alternatives to SGD with faster convergence rates, are far from well-understood in the aspect of implicit regularization.   In this paper, we analyze the implicit gradient regularization in the Heavy-ball momentum accelerated SGD (SGD+M) algorithm with the goal of gaining more theoretical insights on how momentum affects the generalization performance of SGD, and why it tends to introduce a variance reduction  effect whose strength increases with the momentum parameter. 

\vspace{-0.15in}
 \section{Related literature}
 \vspace{-0.1in}

It has been well studied that gradient based optimization  implicitly biases solutions towards models of lower complexity which encourages better generalization. For example, in an over-parameterized  quadratic model, gradient descent with a near-zero initialization implicitly biases solutions towards having a small nuclear norm \citep{arora2019implicit,gunasekar2017implicit,razin2020implicit}, in a least-squares regression problem, gradient descent solutions with 0 initial guess are biased towards having a minimum $\ell_{2}$ norm \citep{soudry2018implicit,neyshabur2014search,ji2019implicit,poggio2020theoretical}. Similarly, in a linear classification problem with separable data, the solution of gradient descent is biased towards the max-margin (i.e., the minimum $\ell_{2}$ norm) solution  \citep{soudry2018implicit}. However in \citep{vardi2021implicit}, the authors showed that these norm-based regularization results proved on simple settings might not extend to non-linear neural networks. 

The first general implicit regularization for GD discovered for all non-linear models (including neural networks) is the Implicit Gradient Regularization (IGR) \citep{barrett2020implicit}. It is shown that the learning rate in gradient descent (GD)  penalizes the second moment of the loss gradients, hence encouraging discovery of flatter optima. Flatter optima usually give higher test-accuracy and are more robust to parameter perturbations \citep{barrett2020implicit}.

  Implicit Gradient Regularization was also discovered for Stochastic Gradient Descent (SGD) \citep{smith2021origin} \cite{JMLR:v20:17-526} , as one (but perhaps not the only one) reason for its good generalization.
   SGD is believed to also benefit from its stochasticity, which might act as a type of noise injection to enhance the performance. Indeed, it is shown in \citep{wu2020noisy} that, by injecting noise to the gradients, full-batch gradient descent will be able to match the performance of SGD with small batch sizes.  Besides injecting noise to the gradients, many other ways of noise injections have been discovered to have an implicit regularization effect on the model parameters, including noise injection to the model space \citep{orvieto2022explicit} and those to the network activations  \citep{camuto2020explicit}. However, how these different types of regularization cooperatively affect generalization is still quite unclear.  

The effect of generalization in momentum accelerated gradient descent has been studied much less. \cite{JMLR:v20:17-526} analyzed the trajectory of SGD+M and found that it can be weakly approximated by solutions of certain Ito stochastic differential equations, which hinted the existence of IGR in (SGD+M). However, both the explicit formula of IGR and its relation to generalization remain unknown.
Recently, in  \citep{https://doi.org/10.48550/arxiv.2110.03891}, the authors analyzed the implicit regularization in momentum (GD+M) based on a linear classification problem with separable data and show that (GD+M) converges to the $L_2$ max-margin solution. Although this is one of the first proposed forms of implicit regularization for momentum based methods, it fails to provide an insight on the implicit regularization for momentum in non-linear neural networks.

Recently, \citep{jelassi2022towards} has shown that the (GD+M) increases the generalization capacity of networks in some special settings (i.e., a simple binary classification problem with a two layer network and part of the input features are much weaker than the rest), but it is unclear to which extent the insight obtained from this special setting can be extended to practical NN models.

To the best of our knowledge, no prior work has derived an implicit regularization for (SGD+M) for general non-linear neural networks. 

\vspace{-0.1in}
\section{Implicit gradient regularization for gradient descent and its relation to generalization}\label{Sec:3}
We briefly review the IGR  defined for GD \citep{barrett2020implicit} which our analysis will be based on. Let $E(\x)$ be the loss function defined over the parameters space $\x \in \mathbb{R}^p$ of the neural network. Gradient descent iterates take a discrete step ($h$) opposite to the gradient of the loss at the current iterate 
  \begin{align}\label{eq:GD}
      \x^{k+1} =  \x^{k} - h \nabla E(\x^{k}).
  \end{align} 

  With an infinitesimal step-size ($h\rightarrow 0$), the trajectory of GD converges to that of the first order ODE 
  \begin{align}\label{eq:GF}
      \x'(t) = -\nabla E(\x(t)) 
  \end{align} 
   known as the gradient flow. But for a finite (albeit small) step size $h$, the updates of GD steps off the path of gradient flow and follow more closely the path of a modified flow: 
    \begin{align}
  \label{IGR}
     \x'(t) = -\nabla \hat{E}(\x(t)), \quad\quad \textrm{where  }  \hat{E}(\x) = E(\x) +\frac{h}{4}\| \nabla E (\x)\|^2.
  \end{align}

 \vspace{-0.1in}
  It is shown \citep{barrett2020implicit} via the so-called classical backward analysis that when GD and the two gradient flows \ref{eq:GF} and \ref{IGR} all set off from the same point $\x^k$, the next gradient update $\x^{k+1}$ is $O(h^2)$ close to the original gradient flow (\ref{eq:GF}) evaluated at the next time point $t_{k+1} = h + t_k$, but is $O(h^3)$ close to the modified flow (\ref{IGR}) evaluated at $t_{k+1}$. So, locally, the modified flow  tracks the gradient descent trajectory more closely than the original flow. To discuss the global behaviour of GD, we need the following definition of closeness between two trajectories.
 
 \begin{definition}[$O(h^{\alpha})$-closeness in the strong sense]\label{def:sp} Fix some $T>0$, we call the trajectory of the discrete GD-update $\x^k$ and a continuous flow $\tilde{\x}(t_{k})$ to be $O(h^{\alpha})$-close in the strong sense \footnote{Weak-sense approximation as studied in \citet{JMLR:v20:17-526} only requires the distributions of the sample processes $\x$ and $\tilde{\x}$ to be close, whereas our strong-sense approximation requires each instance of $\x$ and $\tilde{\x}$ to be close. The strong-sense IGR found using the latter is valid for trainings with any fixed random-batch sequence and any fixed initialization, while the former only characterizes the mean trajectory taking expectation over many different trainings (each with a random batch-sequence and initialization).} if:
 \begin{align*}
  \label{alpha-close}  
     \max_{k \in \mathcal{K}} \|\x^k- \tilde{\x}(t_{k}) \|_{2} \leq c h^{\alpha}, \quad  \textrm{ $\mathcal{K} =
     \left\{ 1,... \left\lfloor \frac{T}{h} \right\rfloor\right\}$}, 
  \end{align*}
 where $t_k = hk$, and $c$ is some constant independent of $h$ and $k$
 \end{definition} 
 
 \vspace{-0.1in}
 
   Definition \ref{def:sp} quantifies the global closeness of a discrete trajectory and a continuous one. By accumulating the local error, one can show that setting off from the same location $\x^0$, the original gradient flow \eqref{eq:GF} is $O(h)$-close to the GD trajectory while that of the modified flow \ref{IGR} is $O(h^2)$-close. Based on this observation, the authors defined the \textcolor{red}{$O(h)$} term $\frac{h}{4}\| \nabla E (\x)\|^2 $ in the modified flow as the IGR term and concluded by stating that it guides the solutions to flatter minima in a highly non-convex landscape.
  
 To justify why minimizing $\| \nabla E (\x)\|^2$ is a good idea and why it encourages flat minimizers (which seems to be missing from the original paper), we borrow an argument from  \citep{foret2020sharpness} that was originally developed for a different purpose.

  Due to the PAC-Bayes analysis \citep{neyshabur2017implicit}, a simplified generalization bound for NN derived under some technical conditions can be stated as \citep{foret2020sharpness}
   \[
 \mathcal{L}_{\mathcal{D}} (\x) \leq \max\limits_{\|\epsilon\|\leq \rho}
\mathcal{L}_{\mathcal{S}}(\x + \epsilon) + \hat{h}\left(\frac{\| \x\|^2}{\rho}\right), \textrm{  for any  } \rho>0,
 \]
  where  $\mathcal{L}_{\mathcal{D}} $ is the population loss (i.e., the generalization error),  $\mathcal{L}_{\mathcal{S}}$ is the empirical/training loss, and $\hat{h}: \mathbb{R}_+ \rightarrow \mathbb{R}_+$ is some strictly increasing function. One can try to minimize this upper bound in order to minimize the generalization error $\mathcal{L}_{\mathcal{D}}$. The term $\hat{h}\left(\frac{\| \x\|^2}{\rho}\right)$ in the upper bound can be controlled by activating a weight decay penalty during training, and the first term in the bound is usually written into
  \[
  \max\limits_{\|\epsilon\|\leq \rho} \mathcal{L}_{\mathcal{S}}(\x + \epsilon) =  \underbrace{\max\limits_{\|\epsilon\|\leq \rho} (\mathcal{L}_{\mathcal{S}}(\x + \epsilon) -  \mathcal{L}_{\mathcal{S}}(\x))}_{\textrm{sharpness}} + \underbrace{\mathcal{L}_{\mathcal{S}}(\x)}_{\textrm{training loss}}
  \]
  which consists of the training loss and an extra term called sharpness, minimizing which will  help with generalization.
  Since directly minimizing the sharpness is difficult, one can then use the following first-order Taylor approximation
  \[
 \max\limits_{\|\epsilon\|\leq \rho}(\mathcal{L}_{\mathcal{S}}(\x + \epsilon) -  \mathcal{L}_{\mathcal{S}}(\x)) \approx \max\limits_{\|\epsilon\|\leq \rho}  \epsilon^T\nabla \mathcal{L}_{\mathcal{S}}(\x) = \rho \|\nabla \mathcal{L}_{\mathcal{S}}(\x)\|.
  \]
  Using our notation, $\nabla \mathcal{L}_{\mathcal{S}}(\x)$ is $\nabla E(\x)$, so the sharpness is approximately proportional to $\|\nabla E\|$ which is the square root of the IGR term. 


   \section{Implicit gradient regularization for Heavy ball accelerated gradient descent }
 
 The main mathematical challenge in studying the IGR in momentum updates is that we now need to perform global error analysis instead of the local backward analysis, as the momentum updates utilizes the entire update history.

 The IGR for Heavy-ball momentum was previously analyzed in \citep{kovachki2021continuous} through studying its relationship with 
 the damped second order Hamiltonian dynamic
\[
m \x'(t) + \gamma \x''(t) + \nabla E (\x(t)) =0
\]
which has been well-known as the underlying ODE for the momentum updates. However, only $O(h)$ closeness is proven between the momentum updates and this ODE trajectory under general step size assumptions \footnote{An $O(h^2)$ closeness is proven under very stringent conditions on the learning rate, which excludes the interesting regime where momentum has an advantage over plain GD in terms of the convergence rate and stability}, which is not enough since the implicit regularization term $\frac{h}{4}\| \nabla E\|^2$ itself is of order $O(h)$. In addition, this approach is difficult to be applied to the stochastic setting. 

In this paper, we circumvent the use of the second order ODE (as it only gives $O(h)$ closeness) and directly obtain a continuous path that is $O(h^2)$-close to the momentum update for both GD and SGD. This is achieved by  linking the momentum updates with a first order piecewise ODE, proving that the ODE  has a piece-wise differentiable trajectory that is $O(h^2)$-close to the momentum updates, and then using its trajectory to study the IGR. 
The detailed argument can be found in the appendix. Here we provide the final mathematical formula for the implicit regularization of (Heavy-Ball) momentum based gradient descent method (IGR-M).
\begin{theorem}
(IGR-M): Let the loss for the full-batch gradient $E$ be smooth and 4-times differentiable, then
the (GD+M) updates
\begin{align*}
    \x^{k+1} = \x^{k} - h \nabla E(\x^{k}) + \beta(\x^{k} -\x^{k-1})  \quad \forall{k=1,2,...,n}
\end{align*} 
are $O(h^2)$ close to 
the flow of the continuous trajectory of the piecewise first-order ODE 
\begin{align}\label{eq:ode}
      \W\x^{'}(t) =  - \frac{1-\beta^{k+1}}{1-\beta} \nabla E(\W\x(t)) - \frac{h\gamma (1+\beta)}{2(1-\beta)^3} \nabla^{2} E(\W\x(t)) \nabla E(\W\x(t)), t \in [t_k, t_{k+1} ]
\end{align}
where $t_k = kh$ and
\[\gamma = (1-\beta^{2k+2})-4(k+1)\beta^{k+1}\frac{(1-\beta)}{(1+\beta)}. \]
Since $\beta^{k}$ quickly decays to 0 as $k$ grows, for a sufficiently large iteration $k$,   \eqref{eq:ode} reduces to:
 \begin{align}
 \label{eq:IGRM}
    \W \x^{'}(t) = -\frac{1}{1-\beta} \nabla \hat{E}(\W\x(t)), \quad t\in[0,T]
 \end{align}

driven by the modified loss 
$\hat{E}(\W\x(t)):=  E(\W\x(t)) + \frac{(1+\beta)h}{4(1-\beta)^2} \|\nabla  E(\W\x(t))\|_{2}^2$. More specifically, for a fixed time $T$, there exists a constant $C$, such that for any learning rate $0<h<T$, we have
\begin{equation}
    \|\W\x(t_{k})- \x^{k}\|_{2} \leq Ch^2,\  t_k = kh, \quad  { k =1,2,..., \lfloor\frac{T}{h}\rfloor}.
\end{equation}
\end{theorem}
\vspace{-0.15in}
Comparing the continuous trajectory traced with \ref{eq:IGRM} and the one without momentum \ref{IGR}, we immediately have a few important observations:
\begin{remark}\label{remark-gd-1} Ignoring the $O(h)$ terms in \ref{eq:IGRM} and \ref{IGR}, we see that the momentum trajectory is driven by a force that is $\frac{1}{1-\beta}$ times as large as that for GD. Therefore, using the same learning rate,  (GD+M) is expected to converge $\frac{1}{1-\beta}$ times as fast as GD. Alternatively, (GD+M) with a learning rate $h$ has roughly the same convergence rate as GD with a learning rate $\frac{h}{1-\beta}$. From now on, we call $\frac{h}{1-\beta}$ the effective learning rate of (GD+M). 
\end{remark}

\begin{remark} \label{remark-gd-2} In terms of the IGR, we can see that adding the momentum amplifies the strength of the IGR (the coefficient in front of the IGR term increased from the $\frac{h}{4}$ in (GD) to the $\frac{h}{4} \frac{1+\beta}{(1-\beta)^2}$ in (GD+M). Even when letting the effective learning rates of the two methods equal (i.e., $\frac{h_{GD+M}}{1-\beta} = h_{GD}$), the implicit regularization in (GD+M) is still $\frac{1+\beta}{1-\beta}$ times stronger. 
\end{remark}
\begin{remark}
The IGR for (GD+M) reduces to the IGR for GD when $\beta=0$.
\end{remark}
\vspace{-0.1in}
With an additional momentum parameter $\beta$, the strength of the implicit regularizer increases by a factor of $\frac{1+\beta}{1-\beta}$. Hence for increasing values of momentum parameter $\beta$, the strength of the regularization increases, thus preferring more flatter trajectories through its descent.


\vspace{-0.1in}
\subsection{IGR-M in a 2D Linear model}
\label{2d}
\vspace{-0.1in}
We first show the impact of IGR-M in a very simple setting that minimizes a loss function with two learnable parameters, i.e., $(\hat{w_{1}},\hat{w_{2}}) = \argmin_{w_{1},w_{2}} E(w_{1},w_{2} ) $ where $E(w_{1},w_{2} ) = \frac{1}{2} (y-w_{1}w_{2}x)^2 $. Here $x,y,w_{1},w_{2}$ are all scalars and mimics a simple scalar linear two-layer network. For a given scalar $(x,y)$, the global minima of $E(w_{1},w_{2})$ are all the points on the curve $w_{1}w_{2} =\frac{y}{x}$. The continuous gradient flow of the parameters are $w_{1}'(t) =- \frac{\partial E }{\partial w_{1}}  $ and $ w_{2}'(t) =- \frac{\partial E }{\partial w_{2}}$.
\begin{wrapfigure}{r}{0.55\textwidth}
  \begin{center}
    \includegraphics[width=0.53\textwidth]{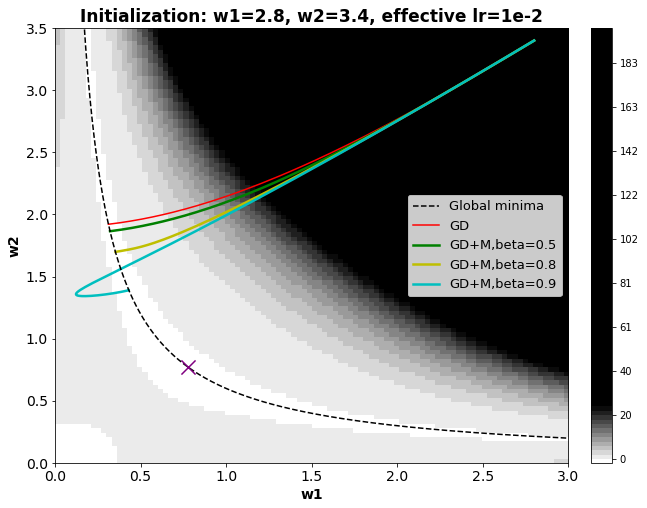}
  \end{center}
  \caption{Implicit regularization for (GD+M) is stronger than that of (GD) for the same effective learning rate $\frac{h}{(1-\beta)}$. As $\beta$ increases the optima seems to  find solution with a lower norm. This confirms Remark \ref{remark-gd-2} that the strength of implicit regularization increases with $\beta$. The background color denotes the magnitude of the norm of the gradient, i.e. , $\|\nabla E \|_{2}^2$}
\end{wrapfigure}
The IGR  for this problem is $\frac{h_{GD}}{4} \| \nabla E\|_{2}^2 = \frac{h_{GD}}{4}(w_{1}^2 + w_{2}^2)E$, which will regularize the trajectory to find minima with a smaller value $(w_{1}^2 + w_{2}^2)$ (towards the cross) among all the global minima lying on $w_{1}w_{2} = \frac{y}{x}$. We intentionally chose the same experiment as in \citep{barrett2020implicit} to compare the effect of implicit regularization for (GD) and (GD+M).

For a fair comparison between (GD) and (GD+M), we set the effective learning rates to be the same, i.e, $h_{GD}=\frac{h_{GD+M}}{1-\beta}$ as in Remark \ref{remark-gd-1}. 
 With the same initialization $(w^{0}_{1},w^{0}_{2}) = (2.8,3.4)$ we explore and track the path of four trajectories with $(h_{GD+M},\beta)$ being $(5\times 10^{-3},0.5)$, $(2\times 10^{-3},0.8) $, $(10^{-3},0.9) $ and $(10^{-2},0.0)$. In all the four cases, the effective learning rates are the same, i.e, $\frac{h_{GD+M}}{(1-\beta)} = h_{GD} = 10^{-2}$. We make the following observations: a) For all the four trajectories, the converged weight parameters $w^{*}=(w_{1}^{*},w_{2}^{*})$ lie on the global minima curve. b) With increasing value of $\beta$, the converged solutions have decreasing value of $\ell_{2} $ norm (or increasing strength of implicit regularization), i.e, $ \|w^{*}_{(10^{-3},0.9)}\|_{2} <  \|w^{*}_{(2\times 10^{-3},0.8)}\|_{2} < \|w^{*}_{(5\times 10^{-3},0.5)}\|_{2} < \|w^{*}_{(10^{-2},0.0)}\|_{2}$.  This observation supports Remark \ref{remark-gd-2}, that the strength of implicit regularization increases with $\beta$, even with the effective learning rate.  
\vspace{-0.15in}
 \section{Implicit regularization in SGD with momentum}
 \vspace{-0.15in}
 In SGD, the full-batch gradient is replaced by it's sampled unbiased estimator. Assume that the loss function $E(\x)$ has the following form
 \begin{equation}
     E(\x) = \frac{1}{M}\sum_{j=1}^M E_{(j)}(\x)
 \end{equation}
 where $E_{(j)}$ is the $j^{th}$ mini-batch loss.
 In the $k^{th}$ iteration, we randomly pick a mini-batch, whose loss is denoted by $E_k$, and update the parameters accordingly.
 The heavy-ball accelerated SGD iterates as 
 \begin{equation}
\label{eq:ball}
\left\{
\begin{aligned}
&\x^{k+1} = \x^{k} - h \nabla E_{k}(\x^{k}) + \beta(\x^{k} -\x^{k-1}) & &  { k =1,2,...,n}\\
&\x^{1} = \x^{0}-  h \nabla E_{0}(\x^{0})\\
&\x^{0} = \x^{-1}=\mathbf{0} \\ 
\end{aligned}
\right.
\end{equation} 

 For each iteration $k$, the update is driven by the current mini-batch loss $E_{k}$. Its continuous approximation is
 \begin{align}
 \label{van_sgd}
     & \x'(t) = -\nabla E_{k}(\x(t)) + \frac{\beta}{h}(\x(t_k)-\x(t_{k-1})) \quad \text{for  $t_{k} < t <t_{k+1} $}
 \end{align}
  during the time $[t_{k},t_{k+1}]$. As a result, the trajectory of $\x$ is continuous but piece-wise differentiable as it is easy to see that the left and right-side derivatives are not equal at a transit point $t_{k}$ from one batch to another,
 $\x'(t_{k}^{+})\neq \x'(t_{k}^{-}) $. Therefore, we expect the implicit regularization term to also have discontinuous derivatives on different intervals. Below we present the mathematical formula for IGR-M in the stochastic setting.

 \begin{theorem}
\label{main_theorem}[IGR-M stochastic version (IGRM$_{s}$)]
Let the loss for each mini-batch $E_{k}$ be 4-times differentiable, then the Heavy Ball momentum updates \ref{eq:ball} are $O(h^2)$ close to the trajectory of the gradient flow with the modified loss,  
\begin{align}
\label{ode_main}
   & \W \x{'} (t) = - \nabla \hat{E}_k(\W\x(t))  \quad  \text{ for $t_{k} \leq t < t_{k+1}$} \notag  \\
   & \text{where,} \quad  \hat{E}_k( \W \x) = \underbrace{G_{k}( \W \x)}_{force} + \underbrace{\frac{h}{4} ( \|\nabla G_{k}( \W \x)\|_2^2 + 2 \sum_{r=0}^{k-1} \beta^{k-r} \|\nabla G_{r}( \W \x)\|_2^2 )}_{IGRM_s}
\end{align}
with $  G_{k}(\W\x(t)) = \sum_{r=0}^k \beta^{k-r}  E_{r}(\W\x(t)) $. Specifically, there exists a constant $C$ such that 
\begin{equation}
    \|\W\x(t_{k})- \x^{k}\|_{2} \leq Ch^2,\   \text{ k =1,2,...,n}. \notag
\end{equation}

\end{theorem}
\vspace{-0.15in}
The theorem tells us that the IGR for momentum is in the form of $\ell_2$ norms of $\nabla G_k$ which is a weighted sum of the historical gradients $\nabla E_k$, $k=0,...,n$, by powers of $\beta$ and evaluated at the current location $\tilde{\x}(t)$. In addition, the stochastic IGR-M has different expressions on different intervals, caused by the abrupt changes between adjacent batches. 
Some further remarks about the (SGD+M) algorithm are listed below.\\
\vspace{-0.1in}
\begin{remark} The flow of the continuous trajectory is governed by the driving-force  $-\nabla G_{k}(\x(t))$ and the negative gradient of an implicit regularizer $ IGRM_{s}(\W \x) = \frac{h}{4} ( \|\nabla G_{k}( \W \x)\|_2^2 + 2 \sum_{r=0}^{k-1} \beta^{k-r} \|\nabla G_{r}( \W \x)\|_2^2 $ which depends on both the learning rate $h$ and momentum $\beta$.
\end{remark}
\begin{remark}
When $\beta=0$, \ref{ode_main} reduces to 
\begin{equation}\label{eq:igr}
\W\x'(t) = -\nabla \left(\underbrace{E_k(\W\x(t)}_{force} +\underbrace{\frac{h}{4} \|\nabla E_k(\W\x(t))\|_2^2)}_{IGR}\right), \quad t \in [t_k,t_{k+1}],
\end{equation}
which is the modified loss for SGD.
\end{remark}
\begin{remark} \label{remark-sgd-2}  Taking expectation over the random selections of batches, we get
$\mathbb{E}(\textrm{IGRM}_{s})(\x)= \frac{h(1+\beta)}{4(1-\beta)^3}\|\nabla E\|^2+\frac{h}{4(1-\beta)^2}F$, where $F:= \mathbb{E} \|E_n-E\|^2$ (appendix Th 3.1). In comparison, the IGR term in SGD after taking expectation is $\frac{h}{4}(\|\nabla E\|^2 +F$) \citep{smith2021origin}, which is much weaker. Even with the adjusted learning rate, the IGR in (SGD+M) is still about $1/(1-\beta)$ times stronger than SGD.
 
\end{remark}

\begin{remark}[Variance reduction]\label{remark-sgd-1} We notice that momentum has a variance reduction effect.  Explicitly, suppose the effective learning rate (Remark \ref{remark-gd-1}) is used so that the force terms in (SGD) and  (SGD+M) have the same expectation, and then we can compare their variance. Let the covariance matrix of $\nabla E_k$ at a fixed point $\x$ be $C: =  \mathbb{E}(\nabla E_k(\x) -\nabla E(\x))(\nabla E_k(\x) -\nabla E(\x))^T$. Here $\nabla E(\x) $ denotes the full-batch gradient. Then the covariance matrix of the force $-E_k$  driving (SGD) is exactly $C$,  while that of the force $-G_k$ driving (SGD+M) is only $\frac{(1-\beta)}{1+\beta}C$ (appendix Th 4.1), which is about $\frac{(1-\beta)}{(1+\beta)}$ times smaller. 
\end{remark}



\begin{remark} It is observed and confirmed by many literature that a larger variance of SGD iterations (caused by either a small batch size or a larger learning rate \citep{smith2017bayesian,li2017stochastic}) increases the generalization power. Larger variance in mini-batch gradients increases the escape efficiency of SGD from bad local minimas \citep{ibayashi2022quasipotential} [See Appendix section-7 for detailed discussion], hence increasing generalization power. Therefore, the variance reduction effect of (SGD+M) hurts generalization. On the other hand,
the fact that (SGD+M) has a stronger IGR (Remark \ref{remark-sgd-2}) and that (SGD+M) is more stable than (SGD) to the use of large effective learning rates (see e.g., \citep{cohen2021gradient}) tend to help with its generalization. This explains why in practice we usually observe that (SGD+M) is only slightly better than (SGD).  

\end{remark}

  \begin{figure*}[t]
    \centering
        \begin{subfigure}[b]{0.375\textwidth}
            \centering
            \includegraphics[width=\textwidth]{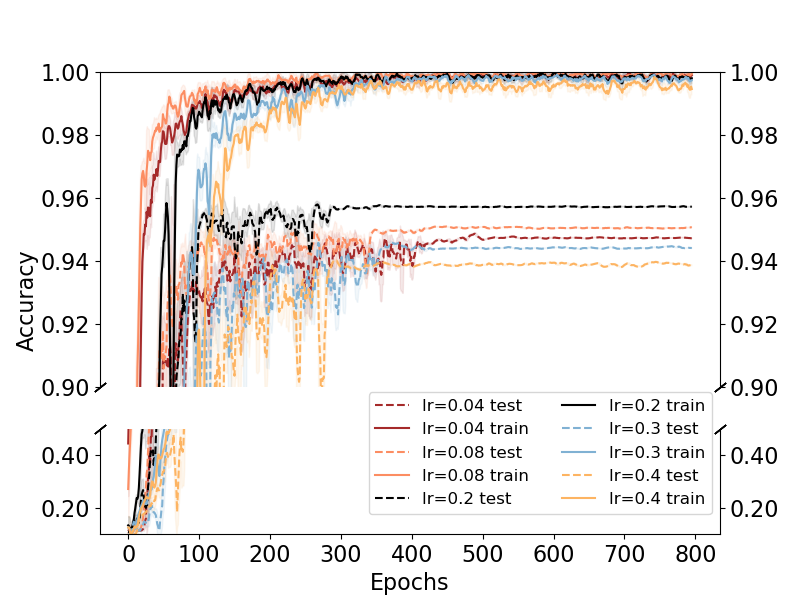}
            \caption[Wideresnet]%
            {{\small Accuracy }}    
            \label{fig:mean and std of net14}
        \end{subfigure}
        \begin{subfigure}[b]{0.375\textwidth}  
            \centering 
            \includegraphics[width=\textwidth]{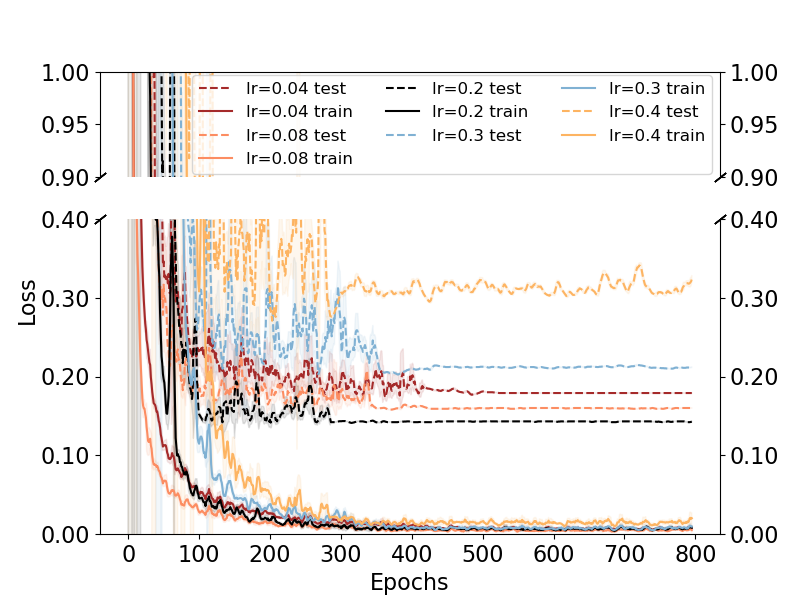}
            \caption[ Resnet-18]%
            {{\small Loss}}    
            \label{fig:mean and std of net24}
        \end{subfigure}
        \caption[ The average and standard deviation of critical parameters ]
        {\small (GD): Classification results of ResNet-18  on MNIST dataset performed with full-batch gradient descent. Figure shows the effect of implicit regularization due to the finite learning rate $h$. Test accuracy improves with higher learning rate till $h=0.2$.  For $h=0.2$, the best test-accuracy is achieved at $ 95.72 \%$  } 
        \label{fig:IGR}
    \end{figure*}
    
     \begin{figure*}[t]
        \centering
        \begin{subfigure}[b]{0.375\textwidth}
            \centering
            \includegraphics[width=\textwidth]{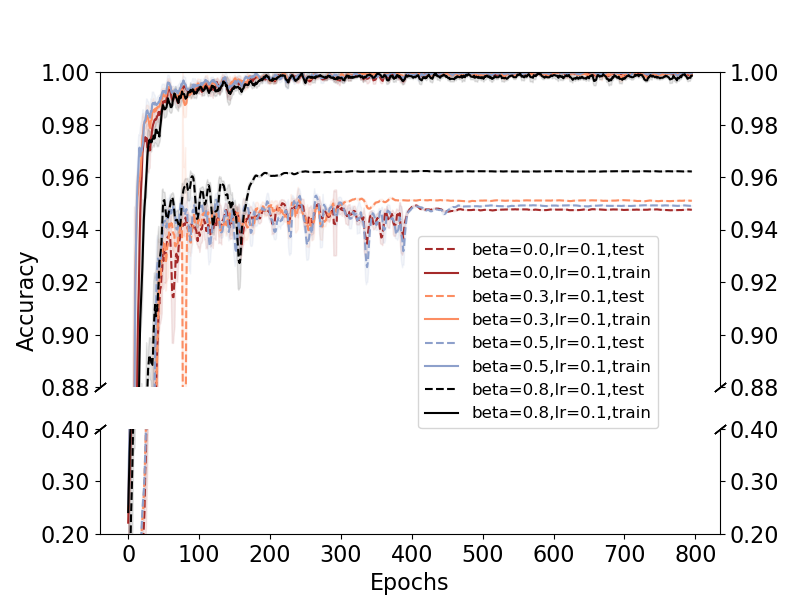}
            \caption[Wideresnet]%
            {{\small Accuracy }}    
            \label{fig:mean and std of net14}
        \end{subfigure}
        \begin{subfigure}[b]{0.375\textwidth}  
            \centering 
            \includegraphics[width=\textwidth]{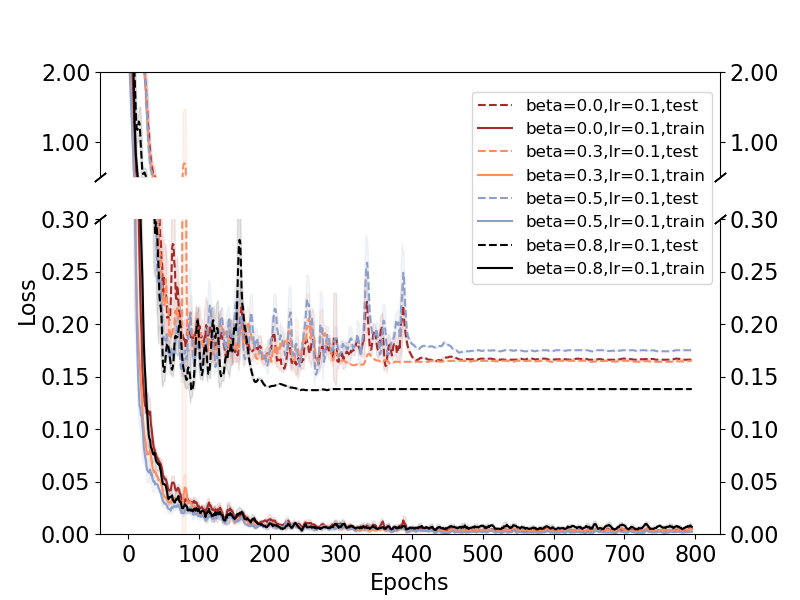}
            \caption[ Resnet-18]%
            {{\small Loss}}    
            \label{fig:mean and std of net24}
        \end{subfigure}
        \caption[ The average and standard deviation of critical parameters ]
        {\small (GD+M): Classification results of ResNet-18 on MNIST dataset trained with various values of momentum parameter $\beta$ for full-batch gradient descent. The best test-accuracy is reported to be $96.22 \% $  } 
        \label{fig:varbeta}
\end{figure*}

\section{Numerical experiments}

Our first experiment is to compare the full-batch (GD) with (GD+M). For a linear least-squares problem with a Hessian matrix bounded by $L$ in the spectral norm, it is well-known that (e.g., \citep{cohen2021gradient}) (GD+M) is stable as long as $h \leq \frac{2+2\beta}{L}$, and GD is stable as long as $h \leq \frac{2}{L}$. This means, the maximum achievable effective learning rate by (GD) is $\frac{2}{L}$, while that by (GD+M) can be as large as $\frac{2(1+\beta)}{L(1-\beta)}$. Since larger effective learning rates means a stronger IGR, (GD+M) clearly benefits from its large stability region. To confirm this, ResNet-18 is used to classify a uniformly sub-sampled MNIST dataset with 1000 training images. All external regularization schemes except learning rate decay and batch normalization have been turned off. We perform the following experiments  : $\mathbf{1}$) Full-batch gradient descent with $\beta=0$ (i.e., GD)   for various learning rate $h$ and the best test accuracy is noted (in Figure \ref{fig:IGR}) to be $95.72\%$.  $\mathbf{2}$)  Full-batch gradient descent with momentum (GD+M) performed for various $\beta$
  with a fixed step-size $h=0.1$ and the best test-accuracy is noted (in Figure \ref{fig:varbeta}) to be $96.22\%$. Our observation is that the best performance of GD (across all learning rates) is worse than the best performance of (GD+M) (across all $\beta$'s). This observation failed to be explained by the known theory of edge of stability\footnote{ 
  ``edge of stability'' (EOS) \cite{cohen2021gradient} is a  phenomenon that shows during network training by the full batch gradient descent, the sharpness $|| \nabla^2 E ||_{2} $ tends to progressively increase until it reaches the threshold $\frac{2}{h}$ and then hovers around it. For GD+M, the sharpness will hover around a large value $\frac{2(1+\beta)}{h}$. Since larger sharpness usually means worse generalization, the EOS theory then predicts that adding momentum hurts the generalization. }
  but can be well-explained by our implicit regularization theory for (GD+M) as adding momentum increases the strength of the IGR.

\begin{figure*}[t]

        \centering
        \begin{subfigure}[b]{0.375\textwidth}
            \centering
            \includegraphics[width=\textwidth]{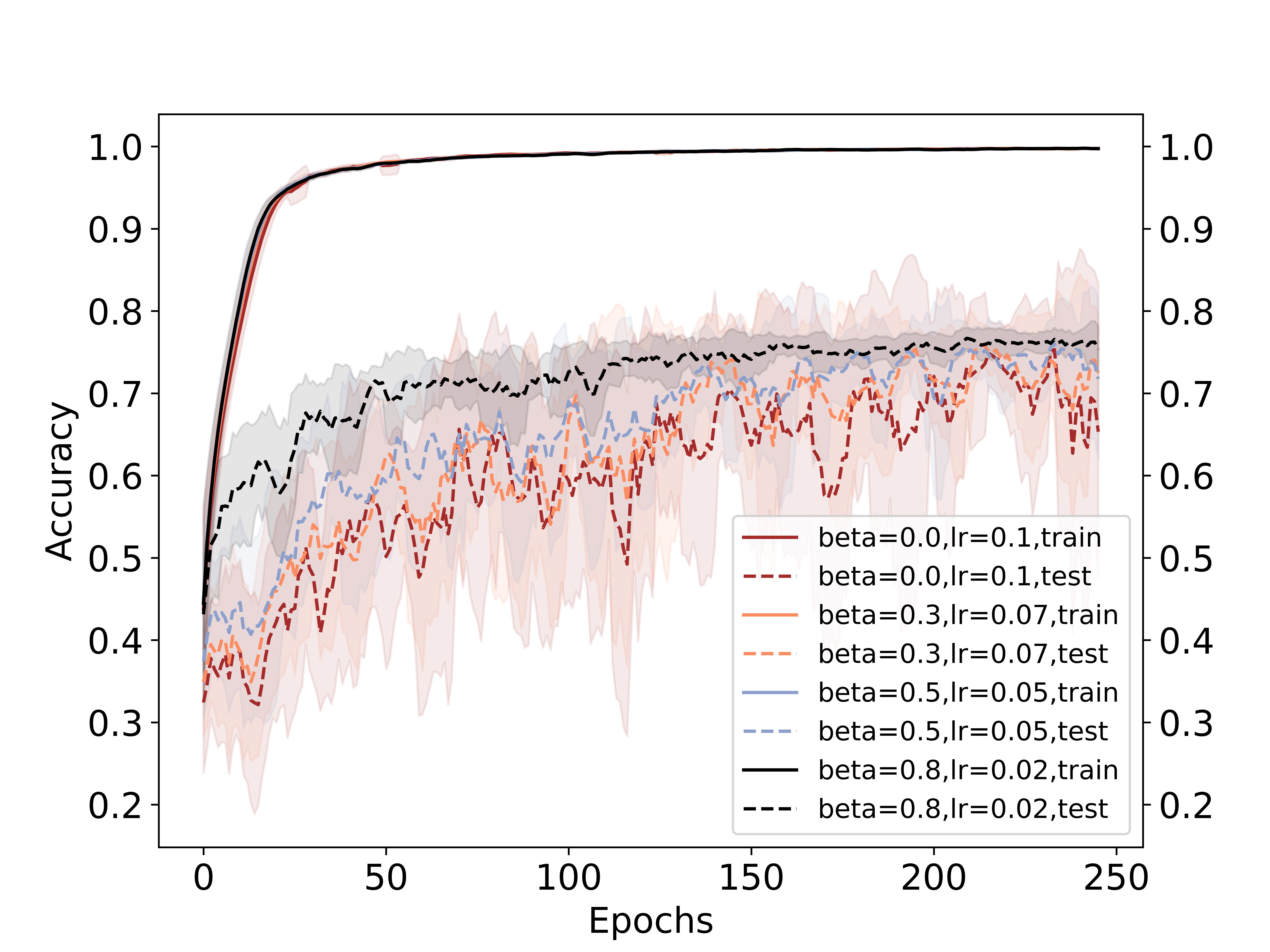}
            \caption[Wideresnet]%
            {{\small WideresNet-16-8}}    
            \label{fig:mean and std of net14}
        \end{subfigure}
        \begin{subfigure}[b]{0.375\textwidth}  
            \centering 
            \includegraphics[width=\textwidth]{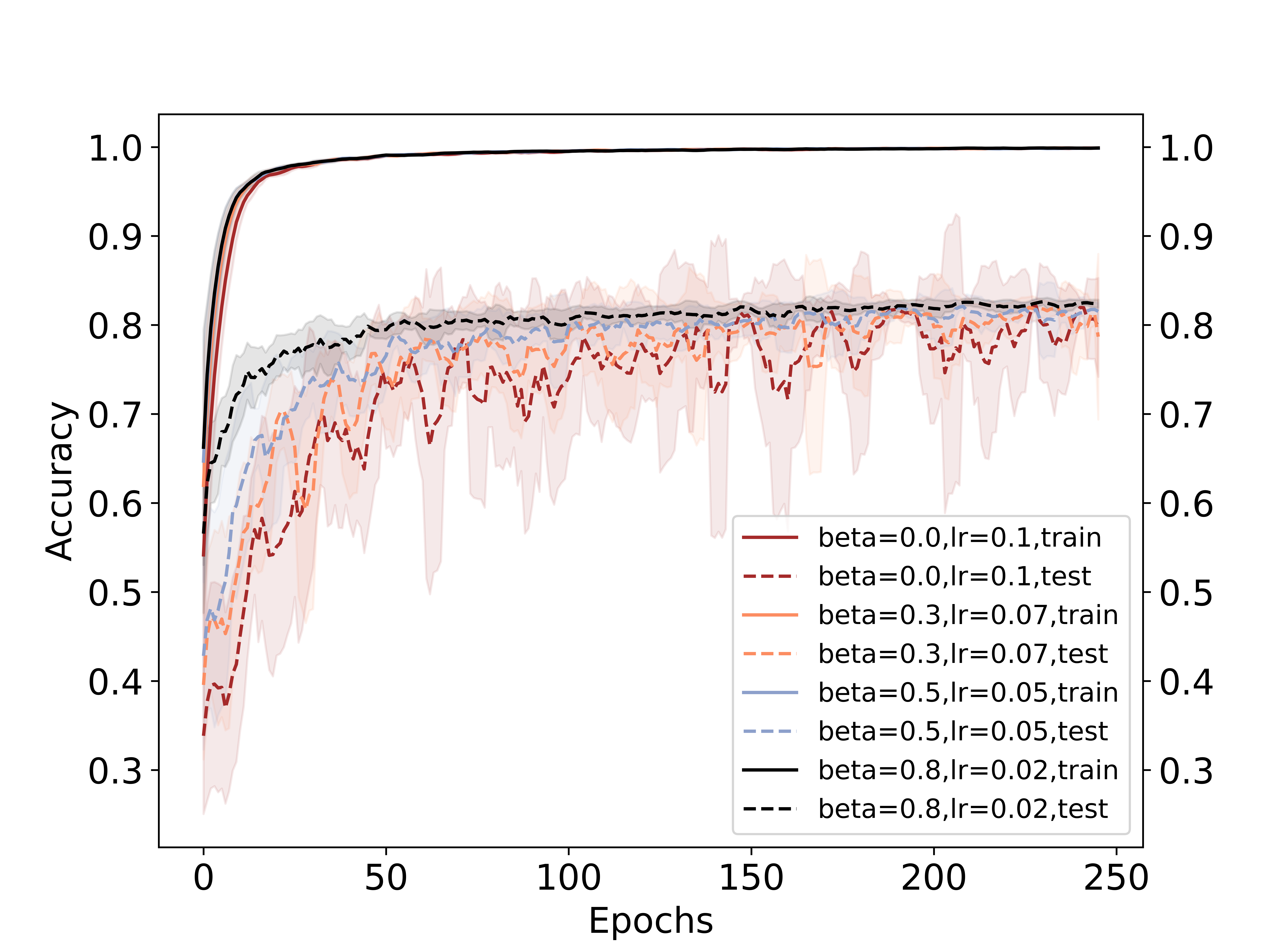}
            \caption[ Resnet-18]%
            {{\small ResNet-18}}    
            \label{fig:mean and std of net24}
        \end{subfigure}
        \vskip\baselineskip \vspace{-0.15 in}
        \begin{subfigure}[b]{0.375\textwidth}   
            \centering 
            \includegraphics[width=\textwidth]{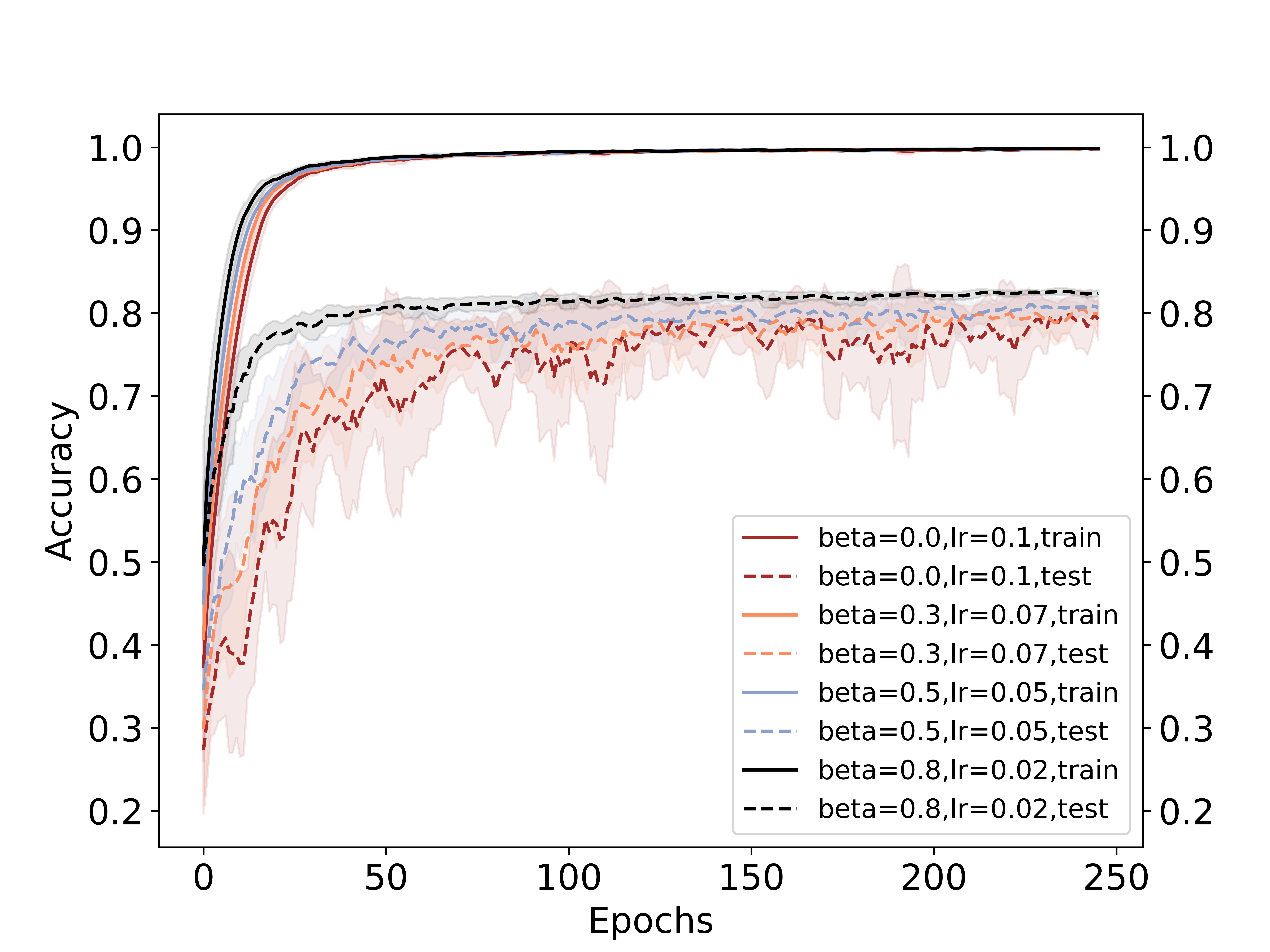}
            \caption[Resnet-50]%
            {{\small ResNet-50}}    
            \label{fig:mean and std of net34}
        \end{subfigure}
        \begin{subfigure}[b]{0.375\textwidth}   
            \centering 
            \includegraphics[width=\textwidth]{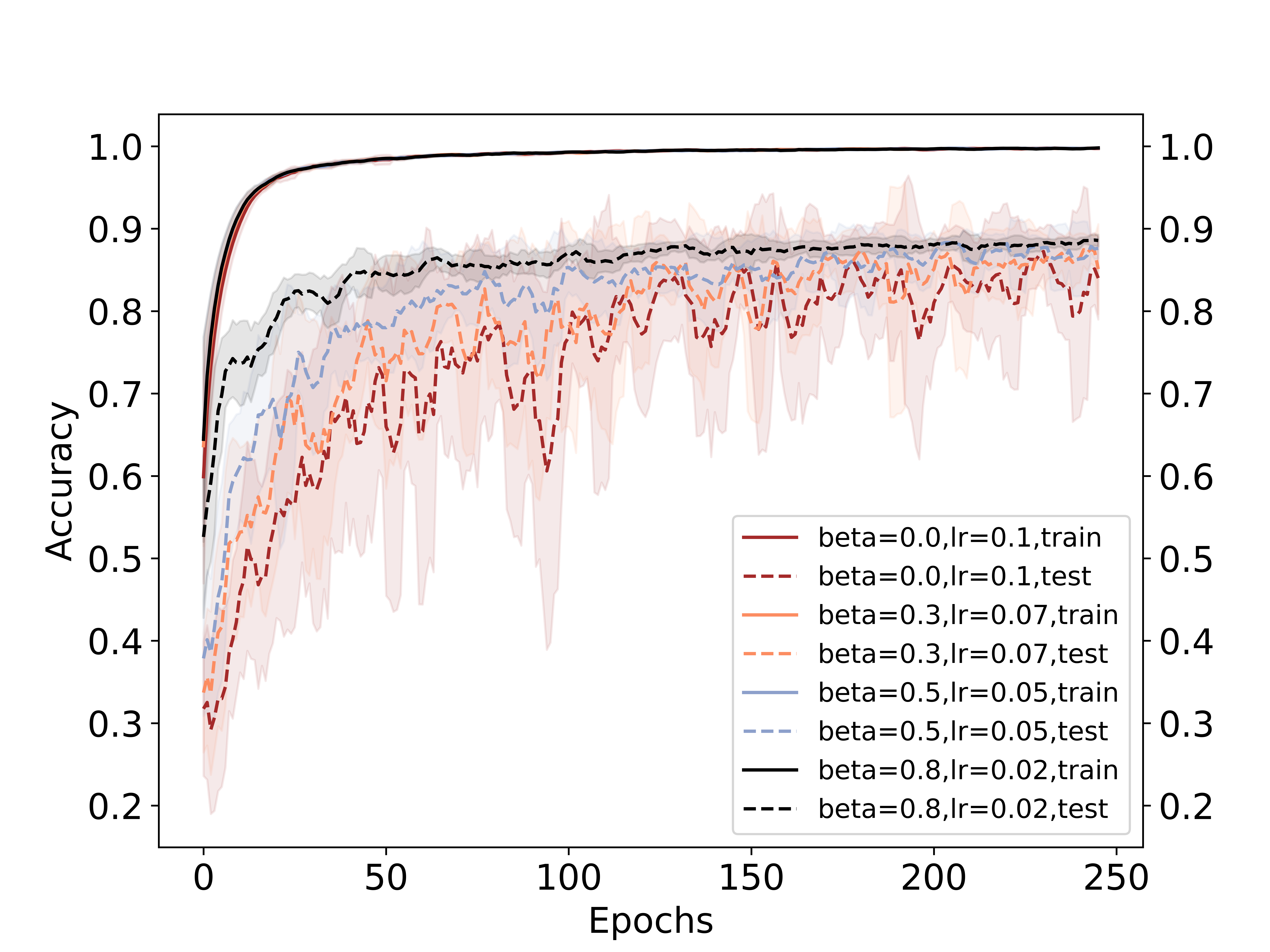}
            \caption[Densenet-121]%
            {{\small DenseNet-121}}    
            \label{fig:mean and std of net44}
        \end{subfigure}
        \caption[ The average and standard deviation of critical parameters ]
        {\small Classification results for CIFAR-10 dataset with various network architectures with combinations of $(h,\beta)$ chosen such that the effective learning rate $\frac{h}{(1-\beta)}$ remains same. In all of the experiments, external regularization like weight-decay, l.r scheduler, dropout,label-smoothing are kept off (except Batch-normalization). The results have been averaged over 3 random seeds having different initializations. (SGD+M) has a) higher test accuracy for increasing $\beta$ than (SGD) confirming Remark \ref{remark-sgd-1} b) Less variance for test accuracy confirming Remark \ref{remark-sgd-2}. }.  
        \label{SGD+Mfig}
    \end{figure*}
    
To study the effect of implicit regularization (SGD+M), a series of experiments have been performed on an image classification task. Four well-known and popular network architectures namely DenseNet \citep{iandola2014densenet}, ResNet-18, ResNet-50 \citep{he2016deep} and WideResNet \citep{zagoruyko2016wide} are trained to classify images from the CIFAR-10 and CIFAR-100 datasets. We are interested to know how well training these  networks with (SGD) and (SGD+M) respectively can generalize well onto the test dataset. To solely observe the effects of the momentum parameter $\beta$ and learning rate $h$ in generalization, we turn off all the external regularization  like dropout, weight-decay and label-smoothing. 
We fix the batch-size to 640 in all our experiments. 
\begin{table}[h]
\centering
\setlength\tabcolsep{2.5pt}
   \begin{tabular}{ c c c c c | c c c c}
\toprule
 & \multicolumn{4}{c}{CIFAR-10} &  \multicolumn{4}{c}{CIFAR-100} \\
 \hline
  $\beta$ /$h$ & DN-121 & RN-18 & RN-50 & WRN-16-8 & DN-121 & RN-18 & RN-50 & WRN-16-8   \\
 \hline
0.0/0.10&84.0$\pm$5.0&79.7$\pm$5.6&79.3$\pm$2.5&65.3$\pm$18.1&60.4$\pm$4.9&53.1$\pm$0.6&47.4$\pm$2.1&38.6$\pm$3.8\\
0.3/0.07&85.1$\pm$5.4&78.7$\pm$9.3&80.0$\pm$1.5&72.5$\pm$7.4&60.0$\pm$8.6&52.7$\pm$1.0&48.9$\pm$2.2&37.0$\pm$6.1\\
0.5/0.05&87.6$\pm$1.2&81.5$\pm$0.9&80.7$\pm$0.7&71.8$\pm$9.8&63.2$\pm$2.4&53.3$\pm$1.1&\textbf{50.3$\pm$1.0}&39.4$\pm$3.9\\
0.8/0.02&\textbf{88.6$\pm$0.7}&\textbf{82.4$\pm$0.4}&\textbf{82.4$\pm$0.7}&\textbf{75.4$\pm$2.8}&\textbf{64.7$\pm$0.8}&\textbf{54.3$\pm$0.6}&49.6$\pm$0.7&\textbf{40.6$\pm$1.2}\\
\bottomrule
\end{tabular}
\caption{\small Testing accuracy of CIFAR-10 and CIFAR-100 with different momentum $\beta$ and learning rates $h$, but the same effective learning rate $\frac{h}{(1-\beta)} = 0.1$. The best performance of different models is highlighted. The mean and the standard deviation of test accuracy is calculated over the last 5 epochs and three random seed initializations.  DN-121 is the Densenet-121, RN-18 and RN-50 denote the Resnet-18 and Resnet-50, WRN-16-8 represents the WideResnet with depth 16 and width-factor 8.  } \label{tab}
\end{table}

In the first experiment, we showed (GD+M) has a larger stability region than (GD) and hence allows for the use of a larger effective learning rate. The same conclusion holds for (SGD+M) and (SGD). However, here we want to show that even in the region where both algorithms are stable, (SGD+M) is still not just a scaled version of (SGD). For this purpose, we pick a small learning rate to ensure stability of both algorithms,  and keep the effective learning rate $\frac{h}{(1-\beta)}$ for (SGD+M) to be the same as the learning rate for (SGD) (both equal 0.1). 
We observe from Table \ref{tab} (also Figure \ref{SGD+Mfig}), that the maximum test accuracy is almost always achieved at the highest value for $\beta $. This observation is consistent with Remark-\ref{remark-sgd-2} where we showed that the implicit regularization in (SGD+M) is indeed stronger than (SGD), even after the learning rate adjustment.

The standard deviation of test accuracy in Table \ref{tab} is calculated over the last 5 epochs and three random seed initialization. Lower standard deviation indicates a smoother test accuracy curve meaning less variation of test accuracy within an epoch interval. We observe that the lowest standard deviation  is achieved at the highest value of $\beta$. Hence the observation that variance reduction effect is more prominent with higher $\beta$ is consistent with Remark \ref{remark-sgd-1}.

\section{Combined effects of IGR and noise injection}

Despite its close relation to  sharpness (Section  \ref{Sec:3}), the IGR term $\|\nabla E\|^2$ gets very weak and irrelevant as $\x^k$ approaches a local minimizer, since $\nabla E \rightarrow 0$. However, we find that this would not be the case if there was noise injection, which can help the IGR term retain its power even near local minima. More specifically, as studied in previous literature \citep{orvieto2022explicit, camuto2020explicit}, the algorithm resulting from injecting noise to each iteration of GD is usually called PGD (Perturbed gradient descent) that essentially minimizes an averaged objective function 
\[
R(\x)  := \mathbb{E}_{\eta\sim N(0,\sigma^2 \I)} E(\x+\etab).
\]
For small values of $\sigma$, we can expand $R(\x)$ into
\[
R(\x) = E(\x) + \mathbb{E}_{\eta} \etab^T\nabla E(\x) +\frac{1}{2}\mathbb{E}_{\eta} \etab^T \nabla^2 E(\x) \etab + O(\sigma^3) =  E(\x) +\frac{1}{2} \sigma^2 Tr(\nabla^2 E(\x))  +O(\sigma^3),
\]
where $Tr$ denotes the trace operator. Thus minimizing $R(\x)$ regularizes the trace Hessian of $E$. When minimizing $R(\x)$ using an SGD type of update, the iterations would be
\[
\x^{k+1} = \x^{k} -h \nabla E(\x^{k} +\etab_k), \ \ \textrm{where} \ \ \etab_k \sim \mathcal{N}(\mathbf{0},\sigma^2 \I),
\]
which is known as a form of PGD.
Because of the finite learning rate, the updates would follow the modified flow with an IGR term, which in this case is
\begin{equation*}\label{eq:obj_noise}
E(\x+\etab_k) +  \frac{h}{4}\|\nabla E(\x+\etab_k)\|^2.
\end{equation*}
In expectation, the modified loss is
\begin{align*}
& \mathbb{E}_{\etab_k} \left[E(\x+\etab_k) +  \frac{h}{4}\|\nabla E(\x+\etab_k)\|^2)\right] \\  = &\mathbb{E}_{\etab_k} \left[E(\x) +  \etab_k^T \nabla E(\x)+ \frac{1}{2}(\etab_k)^T \nabla^2 E(\x)\etab_k \right]  \\ & + \frac{h}{4} \mathbb{E}_{\etab_k} \|\nabla E(\x) + \nabla^2 E(\x) \etab_k+ \nabla^3 E(\x)[\etab_k,\etab_k] \|^2)+O(\sigma^3)  \\
 = &E(\x) + \sigma^2 Tr(\nabla^2 E(\x)) + \frac{h}{4}\left[\sigma^2\|\nabla^2 E(\x)\|_F^2+ \mathbb{E}_{\etab_k}\|\nabla E(\x)+\nabla^3 E(\x)[\etab_k,\etab_k]\|^2   \right]+  O(\sigma^3).
\end{align*}
We see that now there is a Hessian regularization term $\frac{h}{4}\sigma^2\|\nabla^2 E(\x)\|_F^2$ coming out of IGR which does not vanish even around local minimizers, and it's strength is proportional to the learning rate. We expect this new regularization term to get stronger when momentum is added, as momentum amplifies the power of IGR (Remark \ref{remark-gd-2}, \ref{remark-sgd-2}).  This observation suggests that IGR and noise injection as two different types of implicit regularization might be able to reinforce each other when used collaboratively. 

\section{Conclusion} 
This work studies the generalization of momentum driven gradient descent approach through the lens of implicit regularization (IR) with both theoretical analysis and experimental validation provided. We examined the similarities and differences between (SGD) and (SGD+M) and find that (SGD+M) with suitable parameters outperforms (SGD) in almost all settings. Moreover, we found that in addition to momentum, IGR may also be magnified by noise injection, which is a topic we want to further explore in the future. %


\bibliography{iclr2023_conference}
\bibliographystyle{iclr2023_conference}
\end{document}


\maketitle
This appendix is organized in the following manner. In Section \ref{sec:1}, we prove Theorem 5.1 for (SGD+M) in the main paper by showing the following two parts: 
\begin{itemize}
    \item In Theorem \ref{thm:main_theorem} we prove that the local error between the continuous piecewise-differentiable trajectory and the Heavy Ball momentum SGD in each iteration is $O(h^3)$. This Theorem depends on Lemma \ref{lm:bound_derivative} which shows that higher order derivatives are bounded. 
    \item Then in Theorem \ref{thm:global} we prove that the global error between the continuous trajectory and the Heavy-ball momentum update is $O(h^2)$. 
\end{itemize}
In Section \ref{sec:2}, we prove Theorem 4.1 in the main paper for (GD+M) by showing that this is sub-case of Theorem 5.1 (in the main paper) when the mini-batch loss $E_{k}$ is the full-batch loss $E$ for each of the piece-wise differentiable trajectory in Corollary \ref{cor:GD+M} (appendix). 
Finally, we presented the proof for Remark-5.2 and Remark-5.3 in Section \ref{sec:4} and Section \ref{sec:3}.

\section{Proof of Theorem 5.1 }
\label{sec:1}
Throughout the appendix, $\|\cdot\|$ represents $\ell_2$ norm for vectors and matrices, and Frobenius norm for 3 or 4-dimensional tensors.
\begin{theorem}
\label{thm:main_theorem}[Bound on the local error]
Let the loss for each mini-batch $E_{n}$ be smooth and sufficiently (4-times) differentiable, and its zeroth to fourth order derivatives are bounded, then in each iteration, the Heavy Ball momentum SGD update
\begin{equation}
\left\{
\begin{aligned}\label{eq:SGD-HB}
&\x^{n+1} = \x^{n} - h \nabla E_{n}(\x^{n}) + \beta(\x^{n} -\x^{n-1}), & &  {n = 1,2,...,N}\\
&\x^{1} = \x^{0}-  h \nabla E_{0}(\x^{0})\\
&\x^{0} = \x^{-1}=\mathbf{0} \\ 
\end{aligned}
\right.
\end{equation} 
is locally $O(h^3)$-close to the flow of the following modified ODE when updating the $n^{th}$ mini-batch 
\begin{align}
\label{eq:main_scheme}
    \W\x{'} (t) = - \nabla G_{n}(\W\x(t)) - A_n(\W\x(t)),  &  & \text{ for $t_{n} \leq t < t_{n+1}$}
\end{align}
with $t_n = nh$.
Specifically, the following equality holds for each iteration
\begin{equation}\label{eq:hcubeerr}
    \W\x(t_{n+1}) = \W\x(t_{n}) - h\nabla E_{n}(\W\x(t_{n}))  + \beta( \W\x(t_{n}) -\W\x(t_{n-1}) ) + O(h^3). 
\end{equation}
Here 
\[G_{n}(\W\x) = \sum_{k=0}^n \beta^{n-k} E_{k}(\W\x) , A_n( \W\x) = \frac{h}{2} \sum_{k=0}^n \beta^{n-k}  C_{k} (\W\x),
\]
and $C_{k}(\W\x) = \nabla^2 G_{k}(\W\x) \nabla G_{k}(\W\x) + \beta  \nabla^2 G_{k-1}(\W\x) \nabla G_{k-1}(\W\x))$ with initial condition $C_{0}(\x) =\nabla^2 G_{0}(\W\x) \nabla G_{0}(\W\x)$. 
\end{theorem}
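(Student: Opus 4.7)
The plan is to verify the matching identity (\ref{eq:hcubeerr}) by Taylor expanding $\W\x(t_{n+1})$ forward from $t_n$ using the ODE on the $n$-th interval and $\W\x(t_{n-1})$ backward from $t_n$ using the ODE on the $(n-1)$-th interval, and then showing that the coefficients at orders $h^1$ and $h^2$ line up exactly with the right-hand side of the momentum update, leaving only $O(h^3)$ remainder. The remainders are controlled by the assumed boundedness of derivatives of $E_n$ up to order four, which Lemma \ref{lm:bound_derivative} upgrades into a uniform bound on $\W\x', \W\x'', \W\x'''$ on each interval, justifying the truncation.

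First I would write
\[
\W\x(t_{n+1}) - \W\x(t_n) = h\,\W\x'(t_n^+) + \tfrac{h^2}{2}\W\x''(t_n^+) + O(h^3),
\qquad
\W\x(t_n) - \W\x(t_{n-1}) = h\,\W\x'(t_n^-) - \tfrac{h^2}{2}\W\x''(t_n^-) + O(h^3),
\]
where the one-sided derivatives at $t_n$ are read off from the modified ODE (\ref{eq:main_scheme}):
\[
\W\x'(t_n^+) = -\nabla G_n(\W\x(t_n)) - A_n(\W\x(t_n)),
\qquad
\W\x'(t_n^-) = -\nabla G_{n-1}(\W\x(t_n)) - A_{n-1}(\W\x(t_n)).
\]
The key algebraic step is to exploit the recursions $\nabla G_n = \nabla E_n + \beta\,\nabla G_{n-1}$ and $A_n - \beta A_{n-1} = \tfrac{h}{2}C_n$ which follow directly from the definitions in the theorem statement. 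Substituting these into $h\,\W\x'(t_n^+)$ should yield
\[
h\,\W\x'(t_n^+) = -h\,\nabla E_n(\W\x(t_n)) + \beta h\,\W\x'(t_n^-) - \tfrac{h^2}{2} C_n(\W\x(t_n)),
\]
which accounts for the full $h^1$ contribution on the right-hand side of (\ref{eq:hcubeerr}) and produces one half of the required $h^2$ correction.

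To handle the remaining $h^2$ terms, I would differentiate the ODE once more to obtain $\W\x''(t) = \nabla^2 G_m(\W\x(t))\,\nabla G_m(\W\x(t)) + O(h)$ on the $m$-th interval, where the $O(h)$ absorbs every term involving $A_m$ or $\nabla A_m$ since $A_m$ itself carries an explicit factor of $h$. Evaluating at $t_n^+$ and $t_n^-$ and combining,
\[
\tfrac{h^2}{2}\W\x''(t_n^+) + \tfrac{\beta h^2}{2}\W\x''(t_n^-) = \tfrac{h^2}{2}\bigl[\nabla^2 G_n\,\nabla G_n + \beta\,\nabla^2 G_{n-1}\,\nabla G_{n-1}\bigr](\W\x(t_n)) + O(h^3) = \tfrac{h^2}{2}C_n(\W\x(t_n)) + O(h^3),
\]
which is exactly what is needed to cancel the leftover $-\tfrac{h^2}{2}C_n$ produced in the previous step. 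Adding the two expansions then gives (\ref{eq:hcubeerr}).

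The main obstacle I anticipate is bookkeeping: one must distinguish the one-sided limits $t_n^\pm$, use the correct ODE on each side, and keep track of which terms in $A_n$ and $\nabla A_n$ are higher order in $h$ and can be absorbed into the remainder. A secondary point is the base case $n=0$, where $\W\x(t_{-1})$ is not defined; this must be handled by matching the initialization $\x^1 = \x^0 - h\nabla E_0(\x^0)$ against a one-sided expansion, and the stated initial condition $C_0 = \nabla^2 G_0\,\nabla G_0$ is designed precisely to make this base case consistent with the recursion. Finally, the uniform bounds on $\nabla^k E_n$ for $k\leq 4$ propagate through the chain-rule computations of $\W\x'', \W\x'''$ to guarantee that the $O(h^3)$ remainders are genuinely uniform in $n$, which is what Lemma \ref{lm:bound_derivative} delivers.
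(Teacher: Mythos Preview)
Your proposal is correct and follows essentially the same approach as the paper's proof: one-sided Taylor expansions of $\W\x$ at $t_n$ using the piecewise ODE, combined with the identities $\nabla G_n-\beta\nabla G_{n-1}=\nabla E_n$ and $A_n-\beta A_{n-1}=\tfrac{h}{2}C_n$, so that the $O(h^2)$ terms cancel against the definition of $C_n$, with Lemma~\ref{lm:bound_derivative} controlling the remainders. The only cosmetic difference is that you first isolate the relation $h\,\W\x'(t_n^+)=-h\nabla E_n+\beta h\,\W\x'(t_n^-)-\tfrac{h^2}{2}C_n$ and then cancel, whereas the paper writes out the full local truncation error $\W\x(t_{n+1})-\W\x(t_n)-\beta(\W\x(t_n)-\W\x(t_{n-1}))$ and simplifies in one pass.
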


\begin{proof}
Before proceeding with the proof, we state that the continuous trajectory $\W\x(t)$ is differentiable in the whole domain $[0,T]$ except at the grid points $t_{n}, n=0,1,...,N$, $N = \lfloor \frac{T}{h} \rfloor$. This is because in the interval $t_{n} \leq t < t_{n+1}$, the continuous trajectory is given by  $\W\x{'} (t) = - \nabla G_{n}(\W\x(t)) - A_n(\W\x(t))$, whereas in the the interval $t_{n-1} \leq t < t_{n}$, the trajectory is defined by $\W\x{'} (t) = - \nabla G_{n-1}(\W\x(t)) - A_{n-1}(\W\x(t))$. We notice the following:
\begin{itemize}
    \item the right-hand side and the left-hand side derivatives of the trajectories at any boundary point $t_{n}$ are not equal, i.e, $\W\x{'} (t_{n}^{+}) \neq \W\x{'} (t_{n}^{-}),  n=0,1,...,N$. {Here we define $\W\x'(t_0^-)=\bold{0}$.}
    \item based on Lemma \ref{lm:bound_derivative}, the norms of the first to third-order derivatives of the trajectory $\W\x(t)$, $t\in [0,T]$ can be bounded by constants independent of $h$. 
    Hence from \ref{eq:main_scheme} we can compute the left and right side derivatives at $t_k$  (any boundary point $\forall{k=1,2,...,n})$ as follows: 
\end{itemize}

\begin{enumerate}
\label{simplcify}
    \item $  \W\x{'} (t_{n}^{+}) = - \nabla G_{n}(\W\x(t_{n})) - A_n(\W\x(t_n)) $ as it belongs to trajectory  $t_{n} \leq t < t_{n+1}$
    \item  $\W\x{''} (t_{n}^{+}) =  \nabla^2 G_{n}(\W\x(t_{n}))\nabla G_{n}(\W\x(t_{n})) + O(h)$
      \item $  \W\x{'} (t_{n}^{-}) = - \nabla G_{n-1}(\W\x(t_{n})) - A_{n-1}(\W\x(t_n)) $ as it belongs to trajectory  $t_{n-1} \leq t < t_{n}$
    \item  $\W\x{''} (t_{n}^{-}) =  \nabla^2 G_{n-1}(\W\x(t_{n}))\nabla G_{n-1}(\W\x(t_{n})) + O(h)$
\end{enumerate}
As $\|\W\x{'''}(t)\|$ is bounded by a constant (Lemma \ref{lm:bound_derivative}), the Taylor expansion of the trajectory $\W\x(t)$ is done at the point $t_{n}$ on both sides:
\begin{align}
\label{eq:taylor}
    &\W \x(t_{n+1})=\W\x(t_n)+\W\x'(t_n^+)h+\frac{h^2}{2}\W\x''(t_n^+)+O(h^3),\\
&\W\x(t_n)=\W\x(t_{n-1})+h\W\x'(t_n^-)-\frac{h^2}{2}\W\x''(t_n^-)+O(h^3).
\end{align}
Recall that the main objective of this theorem is to find how well the continuous trajectory $\W\x(t)$ satisfies the H.B  momentum update equation \ref{eq:SGD-HB}. Hence we plug  $\W\x(t_n)$ into \ref{eq:SGD-HB} and examine the resulting error, which is also known as the Local Truncation Error (LTE) in the numerical ODE literature. The calculation is carried out as follows
\begin{align*}
     &\W\x(t_{n+1})-\W\x(t_n)-\beta(\W\x(t_n)-\W\x(t_{n-1}))\\
    =\ \ & h \W\x'(t_n^+)+\frac{h^2}{2}\W\x''(t_n^+)-\beta(h \W\x'(t_n^-)-\frac{h^2}{2}\W\x''(t_n^-)) + O(h^3)\\
    =\ \ & -h\nabla G_n(\W\x(t_n))-h A_n(\W\x(t_n))+\frac{h^2}{2}\nabla^2 G_n(\W\x(t_n))\nabla G_n(\W\x(t_n))+\beta h\nabla G_{n-1}(\W\x(t_n))+\beta h A_{n-1}(\W\x(t_n))\\
    &+\beta\frac{h^2}{2}\nabla^2 G_{n-1}(\W\x(t_n))\nabla G_{n-1}(\W\x(t_n))+O(h^3)\\
    =\ \ &-h\underbrace{(\nabla G_n(\W\x(t_n))-\beta \nabla G_{n-1}(\W\x(t_n)))}_{\nabla E_n(\W\x(t_n))} -h\underbrace{(A_n(\W\x(t_n))-\beta A_{n-1}(\W\x(t_n)))}_{\frac{h}{2}C_{n} }\\
    &+\frac{h^2}{2}\underbrace{(\nabla^2 G_n(\W\x(t_n))\nabla G_n(\W\x(t_n))+\beta\nabla^2 G_{n-1}(\W\x(t_n))\nabla G_{n-1}(\W\x(t_n)))}_{C_{n} \text{(by definition)}}+O(h^3)\\
    =\ \ &-h\nabla E_n(\W\x(t_n))-h\frac{h}{2}C_n(\W\x(t_n))+\frac{h^2}{2}C_n(\W\x(t_n))+O(h^3)\\
    =\ \ &-h\nabla E_n(\W\x(t_n))+O(h^3).
\end{align*}
The equation has been simplified using the following two identities which is easy to verify: $$\nabla G_n(\W\x(t_n))-\beta \nabla G_{n-1}(\W\x(t_n))) =  E_n(\W\x(t_n))  $$ and $$A_n(\W\x(t_n))-\beta A_{n-1}(\W\x(t_n)) = \frac{h}{2}C_{n}. $$ Hence we proved that the solution of the continuous trajectory satisfies the discrete H.B. momentum updates up to an error of order $O(h^3)$. 

\end{proof}

\begin{corollary}
The piecewise ODE \eqref{eq:SGD-HB} in Theorem \ref{thm:main_theorem}, can be equivalently written as:
\begin{align}\label{ode_main_modified}
\begin{split}
   & \W \x{'} (t) = - \nabla \hat{E}_n(\W\x(t))  \quad  \text{ for $t_{n} \leq t < t_{n+1}$},  \\
   & \text{where,} \quad  \hat{E}_n( \W \x) = G_{n}( \W \x) + \frac{h}{4} ( \|\nabla G_{n}( \W \x)\|^2 + 2 \sum_{r=0}^{n-1} \beta^{n-r} \|\nabla G_{r}( \W \x)\|^2 ),
   \end{split}
\end{align}
which is what we used in the statement of Theorem 5.1 in the main paper.
\end{corollary}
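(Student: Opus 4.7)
The plan is to show that the ODE in Theorem \ref{thm:main_theorem} is literally a gradient flow by verifying the identity
\[
\nabla G_n(\W\x) + A_n(\W\x) = \nabla \hat{E}_n(\W\x),
\]
after which substituting into \eqref{eq:main_scheme} immediately yields \eqref{ode_main_modified}. So the entire corollary reduces to a bookkeeping exercise about gradients of squared-norm terms.

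First I would compute $\nabla \hat{E}_n$ term by term. The only nonlinear pieces are the squared norms, and for these I would use the standard identity $\nabla\bigl(\tfrac{1}{2}\|\nabla G_k(\W\x)\|^2\bigr) = \nabla^2 G_k(\W\x)\,\nabla G_k(\W\x)$ (valid because the $E_k$, and hence the $G_k$, are assumed smooth enough by the hypothesis of Theorem \ref{thm:main_theorem}). Applied to the definition of $\hat E_n$, this gives
\[
\nabla \hat{E}_n(\W\x) = \nabla G_n(\W\x) + \frac{h}{2}\nabla^2 G_n(\W\x)\nabla G_n(\W\x) + h\sum_{r=0}^{n-1}\beta^{n-r}\nabla^2 G_r(\W\x)\nabla G_r(\W\x).
\]

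Second, I would expand $A_n$ by plugging in the definition of $C_k$, splitting the resulting double sum into two pieces, and reindexing. Writing
\[
A_n = \frac{h}{2}\sum_{k=0}^n \beta^{n-k}\nabla^2 G_k\,\nabla G_k \;+\; \frac{h}{2}\sum_{k=1}^n \beta^{n-k+1}\nabla^2 G_{k-1}\,\nabla G_{k-1},
\]
where the second sum starts at $k=1$ because of the initial condition $C_0 = \nabla^2 G_0\,\nabla G_0$, and then substituting $j=k-1$ in the second sum, I would combine the two series over $\{0,\dots,n-1\}$ (which doubles each coefficient $\beta^{n-j}$) and isolate the $k=n$ term of the first sum. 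The outcome is
\[
A_n(\W\x) = \frac{h}{2}\nabla^2 G_n\,\nabla G_n + h\sum_{k=0}^{n-1}\beta^{n-k}\nabla^2 G_k\,\nabla G_k,
\]
which exactly matches $\nabla\hat{E}_n - \nabla G_n$ from the first step.

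The main (and really only) subtlety is handling the boundary term $C_0$ correctly in the reindexing, since the $\beta\nabla^2 G_{-1}\nabla G_{-1}$ piece is absent there; this is what is responsible for the factor $\tfrac{h}{4}$ on the $\|\nabla G_n\|^2$ term versus $\tfrac{h}{2}\beta^{n-r}$ on the earlier terms in the definition of $\hat{E}_n$. After verifying the identity, the corollary is immediate: $\W\x'(t) = -\nabla G_n(\W\x) - A_n(\W\x) = -\nabla \hat{E}_n(\W\x)$ on each sub-interval $[t_n,t_{n+1})$.
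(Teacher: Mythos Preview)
Your proposal is correct and follows essentially the same approach as the paper: both expand $A_n$ by inserting the definition of $C_k$, split off the $C_0$ boundary term, reindex the second sum, and combine to obtain $A_n=\tfrac{h}{2}\nabla^2 G_n\nabla G_n+h\sum_{k=0}^{n-1}\beta^{n-k}\nabla^2 G_k\nabla G_k$, then identify this with the gradient of the squared-norm regularizer via $\nabla(\tfrac12\|\nabla G_k\|^2)=\nabla^2 G_k\nabla G_k$. The only cosmetic difference is that you compute $\nabla\hat E_n$ first and match it against $A_n+\nabla G_n$, whereas the paper simplifies $A_n$ first and then recognizes the result as a gradient.
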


\begin{proof}
By the definition of the ODE in Theorem \ref{thm:main_theorem}, we can see
\begin{align*}
A_n(\W\x(t))&=\frac{h}{2}\sum_{k=1}^n\beta^{n-k}\left(\nabla^2 G_k(\W\x)\nabla G_k(\W\x)+\beta\nabla^2 G_{k-1}(\W\x)\nabla G_{k-1}(\W\x)\right) + \frac{h}{2}\beta^n\nabla^2 G_0(\W\x)\nabla G_k(\W\x)\\
&=\frac{h}{2}\sum_{k=0}^n\beta^{n-k}\nabla^2 G_k(\W\x(t))\nabla G_k(\W\x(t))+\frac{h}{2}\sum_{k=1}^n\beta^{n-k+1}\nabla^2 G_{k-1}(\W\x(t))\nabla G_{k-1}(\W\x(t))\\
&=\frac{h}{2}\nabla^2 G_n(\W\x(t))\nabla G_n(\W\x(t))+h\sum_{k=0}^{n-1}\beta^{n-k}\nabla^2 G_k(\W\x(t))\nabla G_k(\W\x(t)).
\end{align*}
Therefore we can rewrite \eqref{eq:main_scheme} as
\begin{equation}
\label{ode_mod}
    \W\x'(t)=-\nabla G_n(\W\x(t))-\frac{h}{4}\nabla\left(\|\nabla G_n(\W\x(t))\|^2+2\sum_{k=0}^{n-1}\beta^{n-k}\|\nabla G_k(\W\x(t))\|^2\right) := \nabla\hat E_n(\W\x(t)),
\end{equation}
where $\hat E_n(\W\x)=G_n(\W\x)+\frac{h}{4}\left(\|\nabla G_n(\W\x)\|^2+2\sum_{k=0}^{n-1}\beta^{n-k}\|\nabla G_k(\W\x)\|^2\right)$.
\end{proof}

 \begin{lemma}\label{lm:bound_derivative}
Under the assumption of Theorem \ref{thm:main_theorem}, let $\W\x(t)$ be defined as in \eqref{eq:main_scheme}, then the first to third order derivatives of $\W\x$ with respect to time are bounded, i.e., there exists constants $c_1,\ c_2,\ c_3$ such that $\|\W\x'(t)\|\leq c_1, \|\W\x''(t)\|\leq c_2,\ \|\W\x'''(t)\|\leq c_3,$ for all $t\in[0,T]$. 
\end{lemma}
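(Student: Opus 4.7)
The plan is to derive bounds on each open subinterval $(t_n,t_{n+1})$ that are uniform in $n$, and then take the supremum over $n=0,1,\dots,N$. The hypothesis gives uniform constants $M_0,\dots,M_4$ with $\|\nabla^j E_k(\x)\|\le M_j$ for every $k$, every $\x$, and every $j=0,\ldots,4$; combined with the geometric-series estimate $\sum_{k=0}^n \beta^{n-k}\le 1/(1-\beta)$ (using $\beta\in(0,1)$), this will control all $\beta$-weighted sums that build $G_n$ and $A_n$.

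First I would bound the spatial derivatives of the right-hand side of the ODE \eqref{eq:main_scheme}. From $G_n=\sum_{k=0}^n \beta^{n-k} E_k$ and the triangle inequality, $\|\nabla^j G_n(\x)\|\le M_j/(1-\beta)$ for $j=1,2,3,4$, uniformly in $n$ and $\x$. Plugging these into $C_k=\nabla^2 G_k\nabla G_k+\beta\nabla^2 G_{k-1}\nabla G_{k-1}$ and into $A_n=(h/2)\sum_{k=0}^n \beta^{n-k}C_k$ yields $\|A_n(\x)\|$, $\|\nabla A_n(\x)\|$, $\|\nabla^2 A_n(\x)\| = O(h)$, again with constants independent of $n$. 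This step uses nothing beyond products and sums of already bounded quantities.

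Next I would read off the time-derivatives of $\W\x(t)$ directly from the ODE. The definition of $\W\x'$ on $(t_n,t_{n+1})$ gives
$$\|\W\x'(t)\|\le \|\nabla G_n(\W\x(t))\|+\|A_n(\W\x(t))\|\le c_1.$$
Differentiating in $t$ produces
$$\W\x''(t)=-\bigl[\nabla^2 G_n(\W\x(t))+\nabla A_n(\W\x(t))\bigr]\W\x'(t),$$
which is bounded by $c_2$ via the spatial bounds of the previous step and $c_1$. A further differentiation expresses $\W\x'''(t)$ as a polynomial in $\W\x'(t)$ and $\W\x''(t)$ with coefficients built from $\nabla^2 G_n,\nabla^3 G_n,\nabla A_n,\nabla^2 A_n$, all bounded, giving $\|\W\x'''(t)\|\le c_3$.

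The main obstacle is simply careful bookkeeping: I have to verify at each step that the constants depend only on the $M_j$ and on $\beta$ (and on a fixed upper bound for $h$), and \emph{not} on $n$, since $n$ ranges up to $\lfloor T/h\rfloor$ which blows up as $h\to 0$. The piecewise nature of the trajectory is harmless here because the bounds are intrinsic to each subinterval and the constants do not depend on which piece we are in; no matching of one-sided derivatives across grid points is required for this lemma.
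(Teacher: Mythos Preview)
Your proposal is correct and follows essentially the same approach as the paper: bound the spatial derivatives of $G_n$ and $A_n$ uniformly in $n$ via the geometric series $\sum_{k\le n}\beta^{n-k}\le 1/(1-\beta)$, then obtain the time-derivative bounds by repeated chain-rule differentiation of the ODE. The only cosmetic difference is that the paper expands $A_n$ into its $C_k$ summands when differentiating, whereas you first bound $\nabla A_n$ and $\nabla^2 A_n$ abstractly; the content is identical.
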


\begin{proof}
 Although the continuous trajectory $\W\x(t)$ is defined piece-wise, with the a step-size $h$, we want to obtain a constant (i.e., $h$-independent) upper bound for its derivatives so as to faithfully truncate the Taylor-expansion \eqref{eq:taylor} to get an error of $O(h^3)$. By the assumption of Theorem \ref{thm:main_theorem}, we have boundedness of $\|\nabla^{(\alpha)} E_{k}(\x)\|$ for $0\leq\alpha \leq4$, i.e., with some constant $c_{0}$, for all $1\leq n\leq N$,
 \[\sup\|\nabla^{(\alpha)} E_{n}(\x)\| \leq c_0, \quad 0\leq\alpha\leq 4, 
 \]
 where $\|\cdot\|$ denotes the Frobenius norm of the tensors. 
 
 Then we can immediately bound the derivatives of $G_k$ and $C_k$, for  any $k$, $\x$, and $0\leq \alpha \leq 4$,
 
\begin{align}
\label{gkbound}
    \|\nabla^{(\alpha)} G_k(\W\x)\|=\left\|\sum_{k=0}^n \beta^{n-k}\nabla^{(\alpha)} E_k(\W\x)\right\|\leq \sum_{k=0}^n\beta^{n-k} \|\nabla^{(\alpha)} E_k(\W \x)\|\leq \frac{c_0}{1-\beta},
\end{align}

\begin{align}
\label{ckbound}
    \|C_k(\W\x)\|\leq \|\nabla^2 G_k(\W\x)\|\|\nabla G_k(\W\x)\|+\beta\|\nabla^2 G_{k-1}(\W\x)\|\|\nabla G_{k-1}(\W\x)\|\leq \frac{1+\beta}{(1-\beta)^2}c_0^2.
\end{align}
Now, using \eqref{gkbound} and \eqref{ckbound} and the boundedness of $h$ (i.e., $h\leq T$), we can show that for some constant $c_{1}$,
\begin{align}\label{eq:bound_first}
    \begin{split}
    \|\W\x'(t)\|&\leq \|G_n(\W\x(t))\|+\frac{h}{2}\left\|\sum_{k=0}^n\beta^{n-k}C_k(\W\x)\right\|\\
    &\leq \sum_{k=0}^n \beta^{n-k}\max_{\W\x,k}\|E_k(\W\x)\| +\frac{h}{2}\sum_{k=0}^n\beta^{n-k} \max_{\W\x,k}\|C_k(\W\x)\|\\
    &\leq\frac{c_0}{(1-\beta)^2}+\frac{c_0^2h(1+\beta)}{2(1-\beta)^3}:=c_1.
    \end{split}
\end{align}
Next we show that $\|\W\x''(t)\|$ is uniformly bounded, 
 \begin{align*}
    \W \x''(t) &= -\sum_{k=0}^n \beta^{n-k}\nabla^2 E_k(\W \x(t))\W\x'(t) \\
    & \quad - \frac{h}{2}\sum_{k=0}^n \beta^{n-k}\underbrace{\left(\nabla^3 G_k(\W\x(t))[\W\x'(t)]\nabla G_k(\W x(t))+\nabla^2 G_k(\W\x(t))\nabla^2 G_k(\W \x(t))\W\x'(t)\right)}_{(I)}\\
    &\quad-\frac{h}{2}\beta\sum_{k=0}^n \beta^{n-k}\underbrace{\left(\nabla^3 G_{k-1}(\W\x(t))[\W\x'(t)]\nabla G_{k-1}(\W \x(t))+\nabla^2 G_{k-1}(\W\x(t))\nabla^2 G_{k-1}(\W \x(t))\W\x'(t)\right)}_{(II)}.
\end{align*}
Hence we have:
 \begin{align}
 \label{intrm}
    & \| \W \x''(t) \| \leq \| \sum_{k=0}^n \beta^{n-k}\nabla^2 E_k(\W \x(t))\W\x'(t) \| + \frac{h}{2(1-\beta)} \|(I)\| + \frac{h\beta}{2(1-\beta)} \|(II)\|.
\end{align}
Individually examining $ \|(I)\|$ and $ \|(II)\|$, we have
\begin{align*}
    \|(I)\| &\leq \| \nabla^3 G_k(\W\x(t))[\W\x'(t)]\nabla G_k(\W \x(t))\|+ \|\nabla^2 G_k(\W\x(t))\nabla^2 G_k(\W \x(t))\W\x'(t)\|\\
    & \leq \|\nabla^3 G_k(\W\x(t)) \| \|\W\x'(t) \| \| \nabla G_k(\W \x(t)) \| + \| \nabla^2 G_k(\W\x(t)) \|^{2}  \|\W\x'(t) \|   \leq 2 c_{0}^2 c_{1}.
\end{align*}
Here in the last inequality we used the fact that $\|\W\x'\|\leq c_1$ as in \eqref{eq:bound_first}.
 Similarly, $\| (II)\|  \leq  2 c_{0}^2 c_{1} $.
 Putting these inequalities into \ref{intrm}, we have for some constant $c_{2}$:
 \begin{align*}
    &   \| \W \x''(t) \| \leq \| \sum_{k=0}^n \beta^{n-k}\nabla^2 E_k(\W \x(t))\W\x'(t) \| + \frac{h}{2(1-\beta)} \|(I)\| + \frac{h\beta}{2(1-\beta)} \|(II)\|\leq c_{2}.
 \end{align*}
 Finally, we bound the thrid order derivative 
 \begin{align*}
       \W\x'''(t) &=  -\sum_{k=0}^n \beta^{n-k}\nabla^3 E_k(\W\x(t))[\W\x'(t)]\W \x'(t) -\sum_{k=0}^n\beta^{n-k}\nabla^2 E_k(\W\x(t))\W\x''(t) \\ 
     & \quad- \frac{h}{2}\sum_{k=0}^n \beta^{n-k} \frac{d (I)}{d t } - \frac{h}{2}\beta \sum_{k=0}^n  \beta^{n-k} \frac{d (II)_{}}{d t }.
 \end{align*}
Bounding the norm on $  \W\x'''(t)$ based on this expression is straightforward.
 
 
 
 
 

\end{proof}

\begin{theorem}[Bound on the global error]\label{thm:global}
  Let $\W\x(t)$ be the solution to \eqref{eq:main_scheme} and assume the conditions in Theorem \ref{thm:main_theorem} hold. Then the global error $\|\e^{n}\| = \| \W\x(t_{n}) - \x^{n} \| $ is of order $O(h^2)$, where $\W \x(t_{n}) $ is the solution of the ODE in \eqref{eq:main_scheme} at the $n^{th}$ boundary point  and $\x^{n} $ is the discrete H.B Momentum update. 
\end{theorem}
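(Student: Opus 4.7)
The plan is to bootstrap the local $O(h^{3})$ truncation estimate from Theorem~\ref{thm:main_theorem} into a global $O(h^{2})$ bound via a discrete Gr\"onwall-type argument, adapted to the two-step linear recursion created by the momentum term. The overall structure mirrors the classical analysis of linear multistep ODE integrators, but the appearance of $\beta\,\e^{n-1}$ forces us to track a vector-valued error instead of a scalar one.

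First I would subtract the Heavy-Ball update~\eqref{eq:SGD-HB} from the continuous identity~\eqref{eq:hcubeerr} at the grid point $t_{n+1}$ to get
\[
\e^{n+1}\;=\;(1+\beta)\,\e^{n}-\beta\,\e^{n-1}-h\bigl(\nabla E_{n}(\W\x(t_{n}))-\nabla E_{n}(\x^{n})\bigr)+r^{n},
\]
with $\|r^{n}\|\leq C h^{3}$ by Theorem~\ref{thm:main_theorem} and $\e^{0}=\e^{-1}=\mathbf{0}$ from initialization. Because $\nabla^{2}E_{n}$ is uniformly bounded by hypothesis, a mean value argument supplies a matrix $B_{n}$ with $\|B_{n}\|\leq L$ such that $\nabla E_{n}(\W\x(t_{n}))-\nabla E_{n}(\x^{n})=B_{n}\e^{n}$, turning the error recursion into a linear, inhomogeneous two-step system with $O(h^{3})$ forcing.

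Next I would pass to the companion form $\mathbf{E}^{n}:=(\e^{n},\e^{n-1})^{\top}$, yielding
\[
\mathbf{E}^{n+1}=\mathcal{M}_{n}(h)\,\mathbf{E}^{n}+(r^{n},\mathbf{0})^{\top},\qquad
\mathcal{M}_{n}(h)=\begin{pmatrix}(1+\beta)I-hB_{n}&-\beta I\\ I&0\end{pmatrix}.
\]
The key stability ingredient is that $\mathcal{M}_{n}(0)$ has characteristic polynomial $(\lambda-1)(\lambda-\beta)$, so its spectrum is $\{1,\beta\}$, semisimple for $\beta\in(0,1)$, and $\sup_{k}\|\mathcal{M}_{n}(0)^{k}\|\leq K(\beta)$ uniformly in $k$. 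A standard perturbation estimate (diagonalize $\mathcal{M}_{n}(0)$ and treat $hB_{n}$ as an $O(h)$ perturbation) then gives $\|\prod_{j=k}^{n-1}\mathcal{M}_{j}(h)\|\leq K(1+C'h)^{n-k}\leq K e^{C'T}$ for all $n\leq N=\lfloor T/h\rfloor$.

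Unrolling the recursion with $\mathbf{E}^{0}=\mathbf{0}$ then produces
\[
\|\mathbf{E}^{n}\|\;\leq\;\sum_{k=0}^{n-1}\Bigl\|\prod_{j=k+1}^{n-1}\mathcal{M}_{j}(h)\Bigr\|\,\|r^{k}\|\;\leq\;n\cdot K e^{C'T}\cdot C h^{3}\;\leq\;(T K C\, e^{C'T})\,h^{2},
\]
so $\|\e^{n}\|\leq\|\mathbf{E}^{n}\|=O(h^{2})$ for all $n\leq N$, which is the claim. The main obstacle will be the uniform bound on the matrix products in the third step: a naive scalar triangle-inequality estimate of the form $\|\e^{n+1}\|\leq(1+\beta+hL)\|\e^{n}\|+\beta\|\e^{n-1}\|+Ch^{3}$ has associated characteristic polynomial $\lambda^{2}-(1+\beta)\lambda-\beta$, whose larger root strictly exceeds $1$, so the norm-based recursion would blow up exponentially in $n$. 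Keeping the signed vector recursion preserves the cancellation that collapses the characteristic polynomial to $(\lambda-1)(\lambda-\beta)$, and only then can the local $O(h^{3})$ error accumulate to merely $O(h^{2})$ over $N\sim 1/h$ steps. A minor additional care point is that the first step is a plain gradient-descent update without momentum, but a direct check shows its contribution to $\e^{1}$ is already $O(h^{2})$ and is absorbed into the same bound.
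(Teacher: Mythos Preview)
Your argument is correct and reaches the same conclusion, but by a somewhat different route than the paper. After deriving the same error recursion, the paper does not pass to the companion matrix; instead it tracks the pair $(\|\e_n\|,\|\e_{n+1}-\e_n\|)$ directly and proves by a joint induction that $\|\e_n\|\leq d_1 e^{d_2 hn}h^2$ and $\|\e_{n+1}-\e_n\|\leq d_3 e^{d_2 hn}h^3$, using the scalar inequality $\|\e_{n+1}-\e_n\|\leq\beta\|\e_n-\e_{n-1}\|+hc_1\|\e_n\|+ch^3$. This is, in effect, your diagonalization carried out by hand: rewriting the recursion as $\e_{n+1}-\e_n=\beta(\e_n-\e_{n-1})+\cdots$ isolates the $\beta$-eigenmode of the unperturbed two-step map, so the dangerous $(1+\beta)$ coefficient never appears and the coupled induction closes. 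Your companion-matrix formulation is the standard zero-stability machinery for linear multistep methods and generalizes more readily (to other momentum variants or higher-order schemes), while the paper's bare-hands induction is more elementary and delivers explicit constants without matrix perturbation theory. Both hinge on exactly the structural fact you singled out---that the signed recursion has characteristic polynomial $(\lambda-1)(\lambda-\beta)$ rather than $\lambda^2-(1+\beta)\lambda-\beta$---so neither would survive the naive scalar triangle-inequality bound.
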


\begin{proof}
In Theorem 1.1, we already showed that the solution of the piecewise ODE $ \W\x(t_{n}) $ satisfies 
\begin{align}
    \W\x(t_{n+1}) = \W\x(t_{n}) - h\nabla E_{n}(\W\x(t_{n}))  + \beta( \W\x(t_{n}) -\W\x(t_{n-1}) ) + O(h^3),
\end{align}
and by definition, the discrete H.B momentum update satisfy 
\begin{align}
    \x^{n+1} = \x^{n} - h \nabla E_{k}(\x^{n}) + \beta(\x^{n} -\x^{n-1}).
\end{align}
Let the error at the $n^{th}$ update is denoted by $\e_{n}=  \W\x(t_{n}) -  \x_{n}$, then subtracting the two updates, we have:
\begin{align}
\label{eq:error}
    & \e_{n+1} = \e_{n} + \beta ( \e_{n} -\e_{n-1} ) - h\underbrace{(\nabla E_{n}(\x^{n})- \nabla E_{n}(\W\x(t_{n})))}_{M} +O(h^3).
\end{align}
By the assumption in Theorem \ref{thm:main_theorem}, there exists some constant $c_1$ such that $\|\nabla^2 E_n\|_\infty:=\max_{\x} \|\nabla^2 E_n(\x)\|\leq c_1$. Then $M$ can be bounded by
\begin{align}
    & \| M\| = \|\nabla E_{n}(\x^{n})- \nabla E_{n}(\W\x(t_{n}))  \| \leq \|\nabla^2 E_n\|_\infty\|\e_{n} \| \leq c_1 \|  \e_{n} \|.
\end{align}

Now  taking the norm on $\e_{n+1}-\e_{n}$ and applying triangular inequality on the right hand side of \eqref{eq:error}, we have for some constant $c$:
\begin{align}\label{eq:iteration_gap}
  &  \| \e_{n+1}-\e_{n}\| \leq \beta  \| \e_{n}-\e_{n-1}\| +h \|M\|_{2} +ch^3\leq \beta  \| \e_{n}-\e_{n-1}\| +h c_1 \| \e_{n} \|   +ch^3.  
\end{align}
  Defining three quantities $d_{1}= \frac{c}{c_{1}} $, $d_{2} = \frac{2c_{1}}{1-\beta} $, and $d_{3} = \frac{2c}{1-\beta} $, now  we prove the following statement using principle of induction 
  \begin{equation}\label{eq:induction}
  \|\e_{n} \| \leq  d_{1} \e^{d_{2}hn} h^2,\quad \|\e_{n+1} - \e_{n}\| \leq d_{3}\e^{d_{2}hn}h^3,\ \quad n \geq 0.
  \end{equation}
  We first show that base case. When $n=0$, by definition we have $\|\e_0\|=\|\W\x(0)-\x^0\|=0$. And by \eqref{eq:error}, $\|\e_1-\e_0\|\leq ch^3<d_3h^3$, hence the induction base holds.
  
  Assume the proposition\eqref{eq:induction} holds for  $(n-1)$, that is, $\|\e_{n-1}\|\leq d_1\e^{d_2h(n-1)}h^2,\ \text{and}\ \|\e_{n} - \e_{n-1}\| \leq d_{3}\e^{d_{2}h(n-1)}h^3 $, then in the $n$th case,
  \begin{align*}
   \|\e_{n}\| &\leq   \|\e_{n-1}\| + \| \e_{n}-\e_{n-1}\|\\
   &  \leq d_{1} \e^{d_{2}h(n-1)} h^2 +  d_{3}\e^{d_{2}h(n-1)}h^3\\
   & = d_{1}(1+ \frac{d_{3}h}{d_{1}}) \e^{d_{2}h(n-1)} h^2 =  d_{1}(1+ d_{2}h) \e^{d_{2}h(n-1)} h^2 \\
   & \leq  d_{1} \e^{d_{2}hn} h^2
\end{align*}
and by \eqref{eq:iteration_gap},
\begin{align*}
    &  \| \e_{n+1}-\e_{n}\| \leq \beta d_{3}\e^{d_{2}h(n-1)}h^3+h c_{1} d_{1} \e^{d_{2}hn} h^2 +ch^3 \\
    & \leq d_{3}\underbrace{(\frac{d_{1}c_{1}}{d_{3}} + \beta +\frac{c}{d_{3}} )}_{=1} \e^{d_{2}hn} h^3 = d_{3}\e^{d_{2}hn}h^3.
\end{align*}
Then we have proven that the claim \eqref{eq:induction} also holds for the $n^{th}$ case. 



\end{proof}


\section{Proof of Theorem 4.1 (IGR-M)} 
\label{sec:2}
\begin{corollary}
\label{cor:GD+M}
 Let the loss $E$ for full-batch gradient descent be smooth and 4-times differentiable, then  GD-momentum update
\begin{align*}
& \x^{n+1} = \x^{n} - h \nabla E(\x^{n}) + \beta(\x^{n} -\x^{n-1}), & &  {n = 1,2,...,N}
\end{align*} 
is $O(h^2)$ close to
the flow of the piecewise ODE 
 \begin{align}
 \label{eq:gdm}
     \W\x'(t) = -\frac{1}{1-\beta} \nabla \hat{E}_n(\W\x(t)), \quad t\in[t_n, t_{n+1}], 
 \end{align}
where the modified loss is given as $$
\hat{E}_n(\W\x(t))=  (1-\beta^{n+1} )\nabla E(\x(t)) + \frac{h}{2(1-\beta)}\left[\frac{(1+\beta)}{(1-\beta)}(1-\beta^{2n+2})-4(n+1)\beta^{n+1}\right] \nabla^{2} E(\x(t)) \nabla E(\x(t)) .
$$
For later intervals when $k\gg \frac{\log h}{\log \beta} $ (we only ruled out a very small number of the initial intervals, as this lower bound grows very slowly (logarithmically) as $h\rightarrow 0$), the modified loss and the ODE both become independent of $k$, that is, the the later GD+momentum updates $\x^{n}$ is $O(h^2)$ close to $\W\x(t_n)$ which is the solution to 
\[
 \W\x'(t) = -\frac{1}{1-\beta} \nabla \hat{E}(\W\x(t)), 
\]
with
$\hat{E}(\W\x(t))=  E(\W\x(t)) + \frac{(1+\beta)h}{4(1-\beta)^2} \|\nabla  E(\W\x(t))\|^2$.
\end{corollary}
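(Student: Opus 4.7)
The plan is to apply Theorem \ref{thm:main_theorem} and Theorem \ref{thm:global} with the specialization $E_k(\W\x)\equiv E(\W\x)$ for every mini-batch index $k$, which exactly reproduces the full-batch (GD+M) update. Smoothness of $E$ and boundedness of its first four derivatives supply the hypotheses of those theorems for free, so the local $O(h^3)$ and global $O(h^2)$ error bounds transfer immediately. The remaining work is algebraic: collapse the averaged coefficients $G_n$ and $A_n$ into the explicit expressions stated in the corollary.

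First I would evaluate the closed form $G_n(\W\x)=\sum_{k=0}^n\beta^{n-k}E(\W\x)=\frac{1-\beta^{n+1}}{1-\beta}E(\W\x)$, from which $\nabla G_n$ and $\nabla^2 G_n$ inherit the same scalar prefactor $\frac{1-\beta^{n+1}}{1-\beta}$. Substituting these into the alternative form \eqref{ode_main_modified} reduces the scalar coefficient of $\|\nabla E\|^2$ to evaluating $S_n=(1-\beta^{n+1})^2+2\sum_{r=0}^{n-1}\beta^{n-r}(1-\beta^{r+1})^2$. Expanding each binomial and applying the elementary identities $\sum_{k=0}^n\beta^{n-k}=\frac{1-\beta^{n+1}}{1-\beta}$, $\sum_{k=0}^n\beta^{n-k}\beta^{k+1}=(n+1)\beta^{n+1}$, and $\sum_{k=0}^n\beta^{n-k}\beta^{2k+2}=\frac{\beta^{n+2}(1-\beta^{n+1})}{1-\beta}$, together with their $n\mapsto n-1$ shifted versions for the inner sum, the terms telescope into $S_n=\frac{1+\beta}{1-\beta}(1-\beta^{2n+2})-4(n+1)\beta^{n+1}$. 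Rearranging to pull a common $\frac{1}{1-\beta}$ into the ODE prefactor and taking gradients then matches the expression displayed in \eqref{eq:gdm}, at which point Theorem \ref{thm:global} closes the global-error part.

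For the autonomous limit, I would note that $\beta^{n+1}$ and $(n+1)\beta^{n+1}$ decay exponentially in $n$; once $n\gtrsim 2\log h/\log\beta$ (comfortably inside the regime $n\gg\log h/\log\beta$) both are $O(h^2)$, so the $n$-dependent vector field $-\frac{1}{1-\beta}\nabla\hat E_n$ agrees with its $n$-free limit $-\frac{1}{1-\beta}\nabla\hat E$, where $\hat E=E+\frac{(1+\beta)h}{4(1-\beta)^2}\|\nabla E\|^2$, up to $O(h^2)$ uniformly. A Gr\"onwall comparison on the two ODE trajectories starting from a common state at the transition index, combined with the $O(h^2)$ global error from the first part, gives $\|\x^n-\W\x(t_n)\|=O(h^2)$ against the autonomous flow for every subsequent $n$, completing the corollary.

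The main obstacle I anticipate is the arithmetic bookkeeping inside $S_n$: three expanded binomial contributions multiplied by two geometric sums (one running to $n$, the other to $n-1$ with an extra weight of $2$), reassembled to isolate the compact form $\frac{1+\beta}{1-\beta}(1-\beta^{2n+2})-4(n+1)\beta^{n+1}$. The asymptotic step is conceptually routine but requires one to pin down the threshold precisely so that the leftover transient $\beta^n$ terms genuinely fit inside the $O(h^2)$ tolerance rather than leaking out a spurious $O(h\log h^{-1})$ contribution.
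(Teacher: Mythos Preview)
Your proposal is correct and follows the same route as the paper: specialize Theorems~\ref{thm:main_theorem} and~\ref{thm:global} by setting $E_k\equiv E$, then collapse the geometric sums. The paper's own proof is a single sentence (``the result straightforwardly follows by setting $E_n=E$''), so your outline in fact supplies the algebraic detail the paper omits; in particular your computation of $S_n=\frac{1+\beta}{1-\beta}(1-\beta^{2n+2})-4(n+1)\beta^{n+1}$ is exactly what is needed, and your Gr\"onwall comparison for the autonomous limit is a clean way to justify a step the paper leaves implicit.
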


\begin{proof}
Corollary \eqref{eq:gdm} is a special case of Theorem \ref{thm:main_theorem}, and the result straightforwardly follows by setting $E_{n}=E $ for all $1\leq n\leq N$. 













\end{proof}

\section{Proof of Remark 5.2}
\label{sec:3}
\begin{theorem}
The expectation of the IGRM for (SGD+M) taken over the draw of random batches is  $\ex(IGRM_{s}) = \frac{h(1+\beta)}{4(1-\beta)^3}\|\nabla E(\W\x(t))\|^2+\frac{h}{4(1-\beta)^2}\ex( \| \nabla E_{j}(\W\x(t)) - \nabla E(\W\x(t)) \|^2)$. This expected IGR for (SGD+M) is greater than that of the expected IGR for (SGD) taken with respect to the draw of the batches. 
\end{theorem}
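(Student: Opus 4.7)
The plan is to read $IGRM_s$ off the modified loss supplied by Corollary 1.2, evaluate its expectation via a bias--variance decomposition in the mini-batch gradients, pass to the steady-state regime $n\gg \log h/\log\beta$ already used in Corollary \ref{cor:GD+M}, and then compare with the $\beta=0$ specialization. Specifically, from $\hat E_n = G_n + \frac{h}{4}(\|\nabla G_n\|^2 + 2\sum_{k=0}^{n-1}\beta^{n-k}\|\nabla G_k\|^2)$ I identify
$$IGRM_s := \frac{h}{4}\Big(\|\nabla G_n(\W\x)\|^2 + 2\sum_{k=0}^{n-1}\beta^{n-k}\|\nabla G_k(\W\x)\|^2\Big),\qquad \nabla G_k = \sum_{j=0}^k \beta^{k-j}\nabla E_j.$$

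The main computation is the expansion of $\ex[\|\nabla G_k\|^2]$. Write $\nabla E_j = \nabla E + \delta_j$ with $\ex[\delta_j]=0$, the $\delta_j$ independent across distinct mini-batch draws, and $\sigma^2 := \ex[\|\delta_j\|^2] = \ex[\|\nabla E_j-\nabla E\|^2]$. Then $\nabla G_k = (\sum_{j=0}^k \beta^{k-j})\nabla E + \sum_{j=0}^k \beta^{k-j}\delta_j$, the cross term vanishes in expectation, and independence kills all off-diagonal $\delta$-pairs, leaving
$$\ex[\|\nabla G_k\|^2] = \Big(\sum_{j=0}^k\beta^{k-j}\Big)^2 \|\nabla E\|^2 + \sum_{j=0}^k \beta^{2(k-j)} \sigma^2 \xrightarrow[k\to\infty]{} \frac{\|\nabla E\|^2}{(1-\beta)^2} + \frac{\sigma^2}{1-\beta^2}.$$
Substituting into $IGRM_s$, collecting the factor $1 + \tfrac{2\beta}{1-\beta} = \tfrac{1+\beta}{1-\beta}$, and using $1-\beta^2=(1-\beta)(1+\beta)$ yields the claimed
$$\ex[IGRM_s] = \frac{h(1+\beta)}{4(1-\beta)^3}\|\nabla E\|^2 + \frac{h}{4(1-\beta)^2}\sigma^2.$$

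Setting $\beta=0$ recovers the SGD analog $\ex[IGR_s]=\tfrac{h}{4}(\|\nabla E\|^2+\sigma^2)$, and since $\tfrac{1+\beta}{(1-\beta)^3}\geq 1$ and $\tfrac{1}{(1-\beta)^2}\geq 1$ on $\beta\in[0,1)$ with strict inequality once $\beta>0$, the termwise comparison gives $\ex[IGRM_s]>\ex[IGR_s]$. The hard part, as I see it, is not the algebra but justifying the steady-state limit cleanly: the transient of length $O(\log h/\log\beta)$ flagged in Corollary \ref{cor:GD+M} must be argued to be negligible, either by absorbing its contribution into the $O(h^2)$ global-error bound of Theorem \ref{thm:global} or by bounding the deviation of $\ex[\|\nabla G_k\|^2]$ from its limit using the geometric decay of $\beta^k$. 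I would also state the sampling model explicitly---independence of successive batches is what lets the off-diagonal covariance terms vanish; any batch-to-batch correlation would contribute additional terms to the $\sigma^2$ coefficient that should be isolated separately.
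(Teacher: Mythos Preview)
Your proposal is essentially the paper's proof: identify $IGRM_s$ from the modified loss, expand $\ex[\|\nabla G_k\|^2]$ via a bias--variance split $\nabla E_j=\nabla E+\delta_j$, sum the resulting geometric series, and pass to the large-$n$ regime. The algebra and the comparison with $\beta=0$ are the same.

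One technical point the paper treats explicitly that you skip: you write $\nabla E_j(\W\x)=\nabla E(\W\x)+\delta_j$ at a single point $\W\x$ and invoke independence of the $\delta_j$. But along the trajectory $\W\x(t)$ depends on \emph{all} earlier batch draws, so $\nabla E_j(\W\x(t))$ and $\nabla E_r(\W\x(t))$ are not a priori uncorrelated for $j\neq r$. The paper handles this by replacing $\W\x(t)$ with the grid point $\W\x(t_j)$ (respectively $\W\x(t_r)$), which is determined by batches $E_0,\ldots,E_{j-1}$ only and hence independent of the fresh draw $E_j$; this swap costs an $O(h)$ Taylor error that becomes $O(h^2)$ after the $h/4$ prefactor and is therefore harmless at the regularizer's order. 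Your closing remark about stating the sampling model is on target, but the specific issue is not batch-to-batch correlation in the sampling, it is the dependence of the evaluation point on past batches---and the fix is exactly this grid-point decoupling, not the global-error bound of Theorem~\ref{thm:global}.
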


\begin{proof}
To avoid confusion of notation use, we use $\ex$ as the symbol for expectation and $E$ as the loss-function.  Here the operator $\ex$ denotes the expectation is with respect to the draw of all the random batches. 
Recalling from \eqref{ode_mod}, the implicit regularizer while updating the $n^{th}$ mini-batch was of the form:
\begin{align}
    & IGRM_{s} = \frac{h}{4}\left(\|\nabla G_n(\W\x(t))\|^2+2\sum_{k=0}^{n-1}\beta^{n-k}\|\nabla G_k(\W\x(t))\|^2\right),\notag \\ 
    & \ex(IGRM_{s}) =  \frac{h}{4}\left(\ex(\|\nabla G_n(\W\x(t))\|^2) + 2\sum_{k=0}^{n-1} \beta^{n-k}\ex( \|\nabla G_k(\W\x(t))\|^2)   \right).  \label{use}
\end{align}
Here as the expectation $\mathbb{E}$ is taken over the draw of random batches,
we first derive $\ex(\|\nabla G_n(\W\x)\|^2)$ as follows
\begin{align}
    & \ex\left(\|\nabla G_n(\W\x(t))\|^2\right)\notag \\
    &\quad= \ex\left(\big(\sum_{k=0}^n \beta^{n-k} \nabla E_{k}(\W\x(t))\big)^T\big(\sum_{r=0}^n \beta^{n-r} \nabla  E_{k}(\W\x(t))\big) \right)\notag\\
    & \quad= \sum_{i=0}^{n} \beta^{2n-2i} \ex\left(\|\nabla E_{i}(\W\x(t))  \|^2\right) + \sum_{k=0}^n \sum_{r=0, k\neq r}^n \beta^{2n-r-k}\ex \left( \nabla E_{k}(\W\x(t))^T \nabla E_{r}(\W\x(t)) \right)\notag \\ 
    & \quad=  \sum_{i=0}^{n} \beta^{2n-2i} \ex(\|\nabla E_{i}(\W\x(t_{i}))  \|^2) + \underbrace{\sum_{k=0}^n \sum_{r=0, k\neq r}^n \beta^{2n-r-k}\ex \left( \nabla E_{k}(\W\x(t_{k}))^T \nabla E_{r}(\W\x(t_{r})) \right)}_{\RomanNumeralCaps{3}} +O(h).\notag  \label{eq:intme}\\ 
\end{align}
We obtain the last step by replacing the variables $\W\x(t)$ by $\W\x(t_{i})$, $\W\x(t_{k})$ and $\W\x(t_{r})$, respectively. We note that changing from  $\W\x(t)$ to  $\W\x(t_{i})$ introduces an $O(h)$ error from Taylor series. This $O(h)$ error gets multiplied with the coefficient $\frac{h}{4}$ in front of the regularizer. This overall $O(h^2)$ error does not affect the regularizer because it is of $O(h)$. 

Note that the random selection of $i^{th}$ mini-batch $E_i$ is independent of $\W\x(t_i)$, we calculate $\RomanNumeralCaps{3}$ as 
\[
\RomanNumeralCaps{3} = \sum_{k=0}^n\sum_{r=0,k\neq r}^n\beta^{2n-r-k}\nabla E(\W\x(t_k))^T\nabla E(\W\x(t_r)).
\]
Recall that the full-batch gradient loss is defined as $\nabla E(\W\x) = \frac{1}{M} \sum_{j=1}^M \nabla E_{(j)}(\W\x) $, where $E_{(j)}$ is the $j$th mini-batch as in (7) of the main paper. The summands in the first term in \eqref{eq:intme} become
\begin{align}
    \ex(\|\nabla E_{i}(\W\x(t_{i}))\|^2 )  =  \| \nabla E(\W\x(t_{i}))\|^2 + \ex \| \nabla E_{i}(\W\x(t_{i})) - \nabla E(\W\x(t_{i})) \|^2.
    \end{align}
We can calculate $\ex(\|\nabla G_n(\W\x(t))\|^2)$ as follows
\begin{align*}
     &\ex(\|\nabla G_n(\W\x(t))\|^2) \\
     &\quad=   \sum_{i=0}^{n} \beta^{2n-2i} \ex(\|\nabla E_{i}(\W\x(t_{i}))  \|^2) + \sum_{k=0}^n\sum_{r=0,k\neq r}^n\beta^{2n-r-k}\nabla E(\W\x(t_k))^T\nabla E(\W\x(t_r))+O(h)\\
     &\quad=  \sum_{i=0}^{n} \beta^{2n-2i} \left( \| \nabla E(\W\x(t_{i}))\|^2 + \ex( \| \nabla E_{i}(\W\x(t_{i})) - \nabla E(\W\x(t_{i})) \|^2)  \right)\\
     &\quad\quad+ \sum_{k=0}^n\sum_{r=0,k\neq r}^n\beta^{2n-r-k}\nabla E(\W\x(t_k))^T\nabla E(\W\x(t_r))+O(h)  \\
    & \quad = \sum_{k=0}^n\sum_{r=0}^n\beta^{2n-r-k}\nabla E(\W\x(t_k))^T\nabla E(\W\x(t_r)) + \sum_{i=0}^n\beta^{2n-2i}\ex(\|\nabla E_i(\W\x(t_i))-\nabla E(\W\x(t_i))\|^2)+O(h) \\ 
    & \quad= \left\|\sum_{k=0}^n\beta^{n-k}\nabla E(\W\x(t))\right\|^2+ \sum_{i=0}^n\beta^{2n-2i}\ex\left(\|\nabla E_i(\W\x(t))-\nabla E(\W\x(t))\|^2\right)+ O(h). \\ 
    &\quad =\left(\frac{1-\beta^{n+1}}{1-\beta}\right)^2\|\nabla E(\W\x(t))\|^2+\frac{1-\beta^{2n+2}}{1-\beta^2}\ex\left( \| \nabla E_{i}(\W\x(t)) - \nabla E(\W\x(t)) \|^2\right)+O(h).
\end{align*}
We write the second to last line similarly as before because changing $\W\x(t_{k})$ to $\W\x(t)$ only introduces $O(h)$ error. 

Similarly for any such $k$, we will have 
\[\ex(\|\nabla G_k(\W\x(t))\|^2) = \left(\frac{1-\beta^{k+1}}{1-\beta}\right)^2\|\nabla E(\W\x(t))\|^2+\frac{1-\beta^{2k+2}}{1-\beta^2}\ex( \| \nabla E_{i}(\W\x(t)) - \nabla E(\W\x(t)) \|^2)+O(h).
\]
Putting the expression for $\ex(\|\nabla G_n(\W\x(t))\|^2) $ and $\ex(\|\nabla G_k(\W\x(t))\|^2) $ into \ref{use}, we get:
\begin{align}
    & \ex(IGRM_{s})\\
    &\quad \approx  \frac{h}{4}\left(\ex(\|\nabla G_n(\W\x(t))\|^2) + 2\sum_{k=0}^{n-1}\beta^{n-k} \ex( \|\nabla G_k(\W\x(t))\|^2)   \right) \\ 
    & \quad= \frac{h}{4}\| \nabla E(\W\x(t))\|^2  \left(\left(\frac{1-\beta^{n+1}}{1-\beta}\right)^2 + 2\sum_{k=0}^{n-1} \beta^{n-k} \left(\frac{1-\beta^{k+1}}{1-\beta}\right)^2\right) \\ 
    &\quad\quad+\frac{h}{4}\ex( \| \nabla E_{i}(\W\x(t)) - \nabla E(\W\x(t)) \|^2)\left(\frac{1-\beta^{2n+2}}{1-\beta^2}+2\sum_{k=0}^{n-1}\beta^{n-k}\frac{1-\beta^{2k+2}}{1-\beta^2}\right)+O(h^2)\label{eq:IGRM_SGD}
\end{align}
For large number of iterations $n$,  \eqref{eq:IGRM_SGD} reduces to 
\[
\mathbb{E}(IGRM_{s})=\frac{h(1+\beta)}{4(1-\beta)^3}\|\nabla E(\W\x(t))\|^2+\frac{h}{4(1-\beta)^2}\ex( \| \nabla E_{i}(\W\x(t)) - \nabla E(\W\x(t)) \|^2).
\]
\end{proof}
\section{Proof of Remark 5.3}
\label{sec:4}
\begin{theorem}
Let $\C$ be the covariance matrix of the driving force of (SGD) at the $k^{th}$ iteration ,i.e, $cov(\nabla E_{k}(\x))= \C \in \mathbb{R}^{p \times p} $ then the covariance matrix for the driving force for (SGD+M) (with adjusted learning rate) is $cov((1-\beta)\nabla G_{k}(\x)) = \frac{1-\beta}{1+\beta} \C$. Here the random vectors $E_{k}$ and $G_{k}$ are evaluated at a fixed point $\x$.  
\end{theorem}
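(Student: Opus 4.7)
The plan is to exploit the explicit representation $\nabla G_k(\x)=\sum_{j=0}^k \beta^{k-j}\nabla E_j(\x)$ established in Theorem \ref{thm:main_theorem}, together with the fact that mini-batches are drawn independently across iterations. Under the (SGD) convention, each $\nabla E_j(\x)$ is an unbiased estimator of the full-batch gradient $\nabla E(\x)$ with covariance $\C$, and the $\nabla E_j(\x)$ are independent across $j$.

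First I would write
\[
\mathrm{cov}\bigl(\nabla G_k(\x)\bigr)=\mathrm{cov}\Bigl(\sum_{j=0}^k\beta^{k-j}\nabla E_j(\x)\Bigr).
\]
By independence of the mini-batch draws at different iterations, the cross-covariances vanish and the variance of a scaled sum is the scaled sum of variances, which gives
\[
\mathrm{cov}\bigl(\nabla G_k(\x)\bigr)=\sum_{j=0}^k\beta^{2(k-j)}\,\C=\frac{1-\beta^{2(k+1)}}{1-\beta^2}\,\C.
\]
Next I would apply the scalar factor $(1-\beta)^2$ coming from the adjusted learning rate, yielding
\[
\mathrm{cov}\bigl((1-\beta)\nabla G_k(\x)\bigr)=(1-\beta)^2\cdot\frac{1-\beta^{2(k+1)}}{1-\beta^2}\,\C=\frac{(1-\beta)\bigl(1-\beta^{2(k+1)}\bigr)}{1+\beta}\,\C.
\]
Finally, as in Corollary \ref{cor:GD+M}, for indices $k\gg \frac{\log h}{\log\beta}$ the transient term $\beta^{2(k+1)}$ is negligible, so the covariance collapses to the stated limit $\frac{1-\beta}{1+\beta}\C$. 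The comparison with plain (SGD), whose driving-force covariance is $\C$, is then immediate: since $\frac{1-\beta}{1+\beta}<1$ for $\beta\in(0,1)$, the momentum variant has strictly smaller stochastic noise at the same effective learning rate.

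There is essentially no obstacle here beyond bookkeeping; the only subtlety worth flagging in the write-up is justifying (i) the independence of the $\nabla E_j(\x)$ at a fixed point $\x$, which uses that mini-batches are sampled independently across iterations rather than along the iterate trajectory, and (ii) the passage to the large-$k$ regime so that the prefactor is genuinely $1/(1-\beta^2)$ rather than the finite-$k$ geometric sum. Both are already handled by the conventions adopted earlier in the appendix, so only a brief remark is required.
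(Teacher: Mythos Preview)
Your proposal is correct and follows essentially the same approach as the paper: expand $\nabla G_k$ as the geometric sum of independent mini-batch gradients, use independence to reduce to the diagonal terms, sum the geometric series, and pass to large $k$. Your direct application of covariance bilinearity is in fact slightly cleaner than the paper's route through second moments, which first approximates the mean $\mathbb{E}[(1-\beta)\nabla G_k(\x)]\approx\nabla E(\x)$ and thereby picks up a spurious transient term $-\beta^{k+1}(2-\beta^{k+1})\nabla E(\x)\nabla E(\x)^T$ that your computation never introduces.
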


\begin{proof}
Assuming the stochastic gradient $\nabla E_{k}(\x)$ is sampled from a  distribution with i.i.d entries where the mean is the full-batch gradient $\nabla E(\x)$ and the covariance matrix is $\C$. Then by definition,
\begin{align*}
    & \ex((1-\beta)\nabla G_{k}(\x)) = (1-\beta)\sum_{i=0}^{k}\beta^{k-i} \ex(\nabla E_{i}(\x)) = (1-\beta^{k+1}) \nabla E(\x)  \approx \nabla E(\x).
\end{align*}
From definition of $\C$ we have 
\begin{align}
   &  \C =   \ex((\nabla E_{k}(\x)-\nabla E(\x))(\nabla E_{k}(\x)-\nabla E(\x))^T) 
    = \ex(\nabla E_{k}(\x) \nabla E_{k}(\x)^T )- \nabla E(\x)\nabla E(\x)^T. \label{int7}
\end{align}
Then the covariance matrix for (SGD+M) is :
\begin{align}
    & cov((1-\beta) G_{k}(\x))\notag \\
    &\quad= \ex(((1-\beta)\nabla G_{k}(\x)-\ex((1-\beta)\nabla G_{k}(\x))) ((1-\beta)\nabla G_{k}(\x)-\ex((1-\beta)\nabla G_{k}(\x))^T   ) \notag \\
    & \quad= \ex(\nabla G_{k}(\x) \nabla G_{k}(\x)^T)(1-\beta)^2 - \ex((1-\beta)\nabla G_{k}(\x)) \ex((1-\beta)\nabla G_{k}(\x))^T \notag \\ 
    & \quad \approx  \ex(\nabla G_{k}(\x) \nabla G_{k}(\x)^T)(1-\beta)^2 - \nabla E(\x)\nabla E(\x)^T.  \label{int10}
\end{align}
Now let's evaluate $ \ex(\nabla G_{k}(\x) \nabla G_{k}(\x)^T)$ as follows:
\begin{align}
    & \ex(\nabla G_{k}(\x) \nabla G_{k}(\x)^T)\notag \\
    &\quad= \ex((\sum_{k=0}^n \beta^{n-k}\nabla E_{k}(\x)) (\sum_{k=0}^n \beta^{n-k}\nabla E_{k}(\x))^T  ) \notag \\
    & \quad= \sum_{p=0}^n \beta^{2n-2p} \underbrace{\ex(\nabla E_{p}(\x) \nabla E_{p}(\x)^T)}_{\C+\nabla E(\x)\nabla E(\x)^T \text{from \ref{int7}}} + \sum_{i=0}^n \sum_{j=0,j\neq i}^n \beta^{2n-i-j} \underbrace{\ex( \nabla E_{i}(\x)\nabla E_{j}(\x)^T)}_{\nabla E(\x)\nabla E(\x)^T } \notag \\
    & \quad= (\C+\nabla E(\x)\nabla E(\x)^T )\frac{(1-\beta^{2k+2})}{(1-\beta^2)} + \nabla E(\x)\nabla E(\x)^T [(\frac{1-\beta^{k+1}}{1-\beta})^2 - \frac{1-\beta^{2k+2}}{1-\beta^2}] \notag \\
    & \quad= \frac{1-\beta^{2k+2}}{1-\beta^2} \C + (\frac{1-\beta^{k+1}}{1-\beta})^2 \nabla E(\x)\nabla E(\x)^T.  \label{int8}
\end{align}

Putting \ref{int8} into \ref{int10}, we have:
\begin{align*}
     & cov((1-\beta) G_{k}(\x)) \\
     &\quad \approx \ex(\nabla G_{k}(\x) \nabla G_{k}(\x)^T)(1-\beta)^2 - \nabla E(\x)\nabla E(\x)^T  \notag\\ 
     & \quad\textbf{}= \left( \frac{1-\beta^{2k+2}}{1-\beta^2} \C + (\frac{1-\beta^{k+1}}{1-\beta})^2  \nabla E(\x)\nabla E(\x)^T\right) (1-\beta)^2 -\nabla E(\x)\nabla E(\x)^T \notag \\ 
     & \quad\textbf{}= \frac{(1-\beta^{2k+2})(1-\beta)^2}{1-\beta^2} \C - \beta^{k+1}(2-\beta^{k+1})\nabla E(\x)\nabla E(\x)^T. \notag
\end{align*}
For a high enough iteration $k$, it reduces to:

\begin{align*}
    cov((1-\beta) G_{k}(\x)) = \frac{(1-\beta)^2}{1-\beta^2} \C = \frac{1-\beta}{1+\beta} \C.
\end{align*}

\end{proof}

\newpage

\section{Additional Experiments}
We delay the result for CIFAR-100 classification here in the appendix. The final test accuracy is reported in Table-1 in the manuscript. 

\subsection{CIFAR-100 classification results}
\begin{figure*}[h]

        \centering
        \begin{subfigure}[b]{0.375\textwidth}
            \centering
            \includegraphics[width=\textwidth]{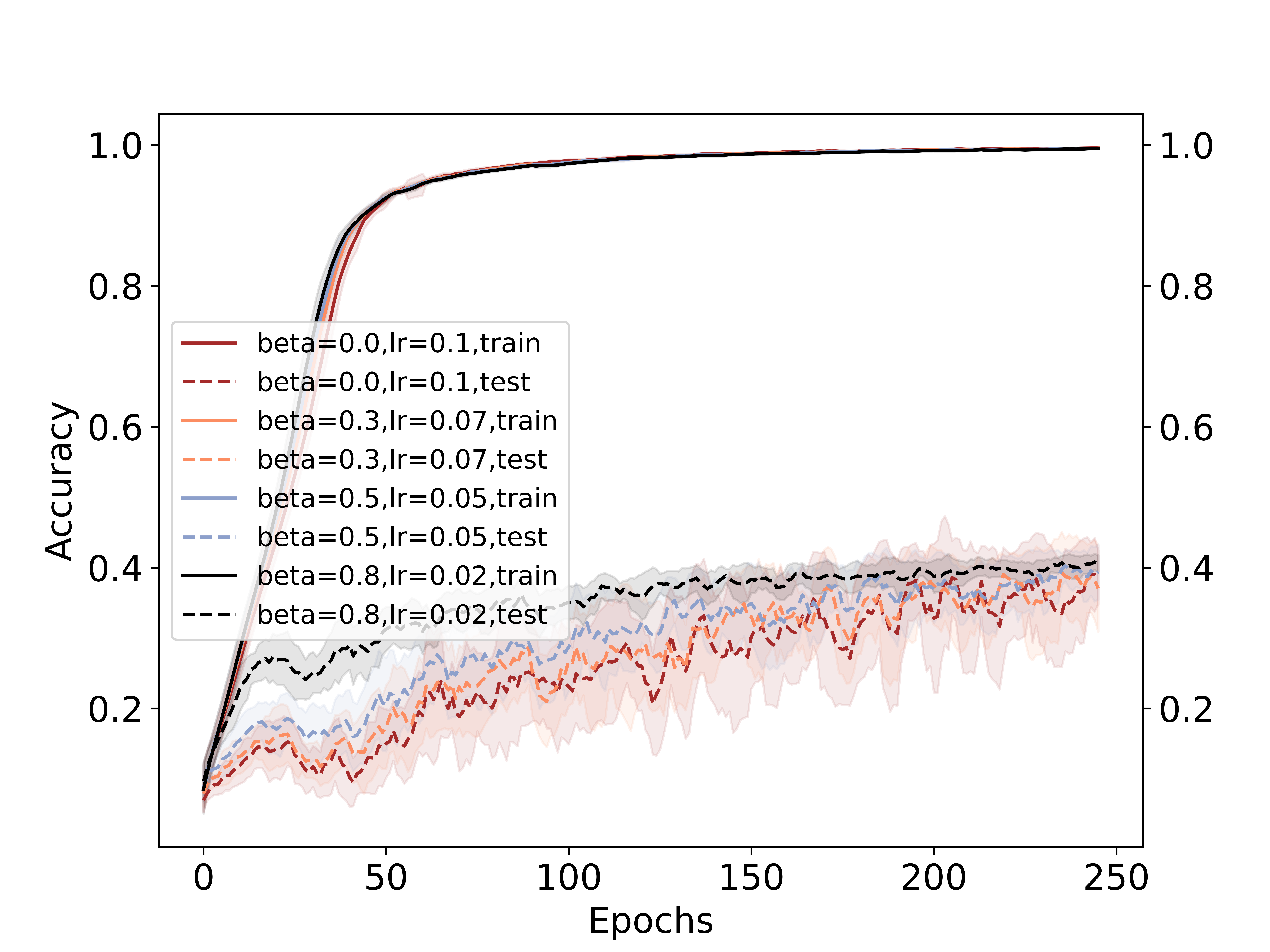}
            \caption[Wideresnet]%
            {{\small WideresNet-16-8}}    
            \label{fig:mean and std of net14}
        \end{subfigure}
        \begin{subfigure}[b]{0.375\textwidth}  
            \centering 
            \includegraphics[width=\textwidth]{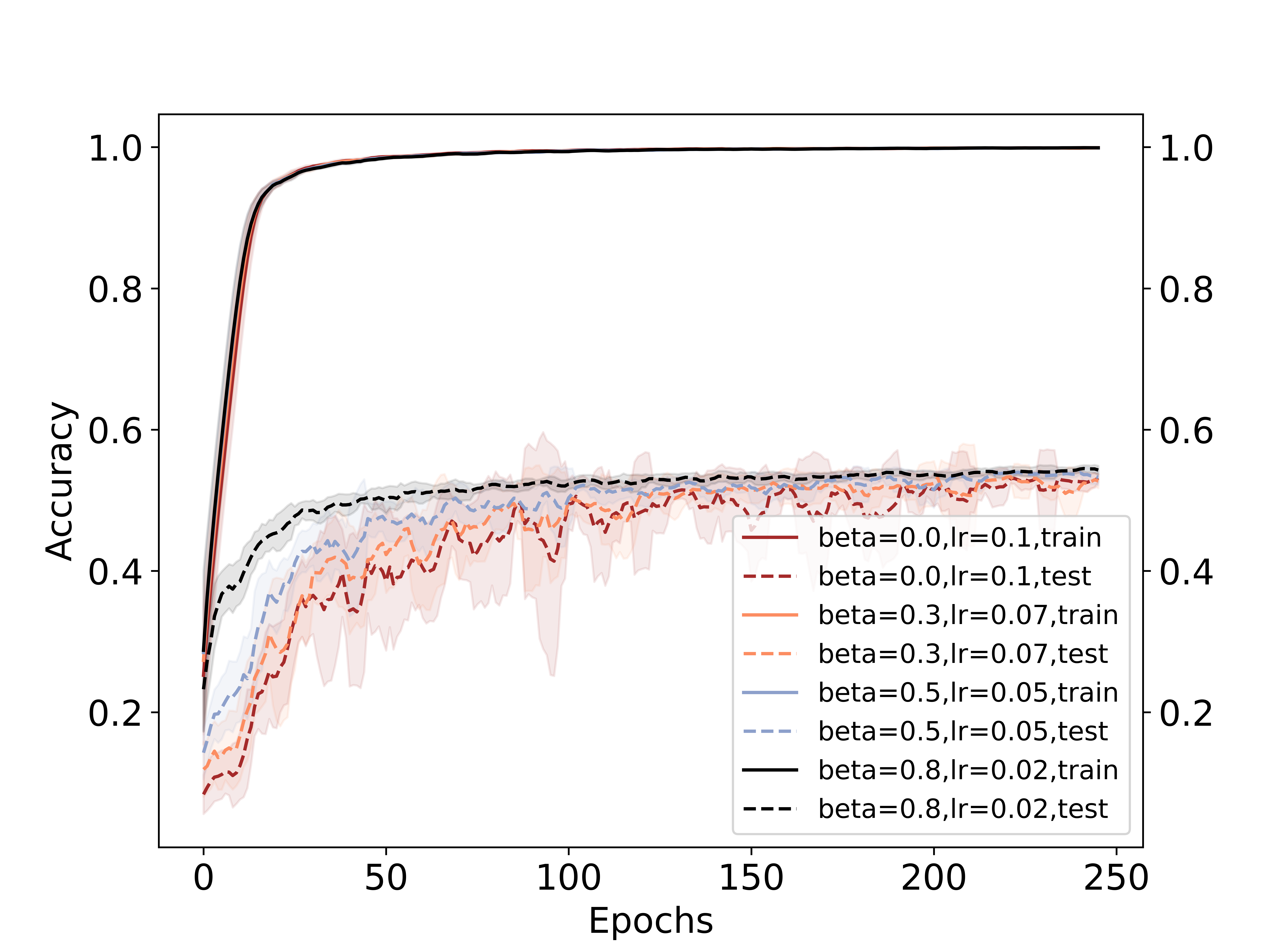}
            \caption[ Resnet-18]%
            {{\small ResNet-18}}    
            \label{fig:mean and std of net24}
        \end{subfigure}
        \vskip\baselineskip \vspace{-0.15 in}
        \begin{subfigure}[b]{0.375\textwidth}   
            \centering 
            \includegraphics[width=\textwidth]{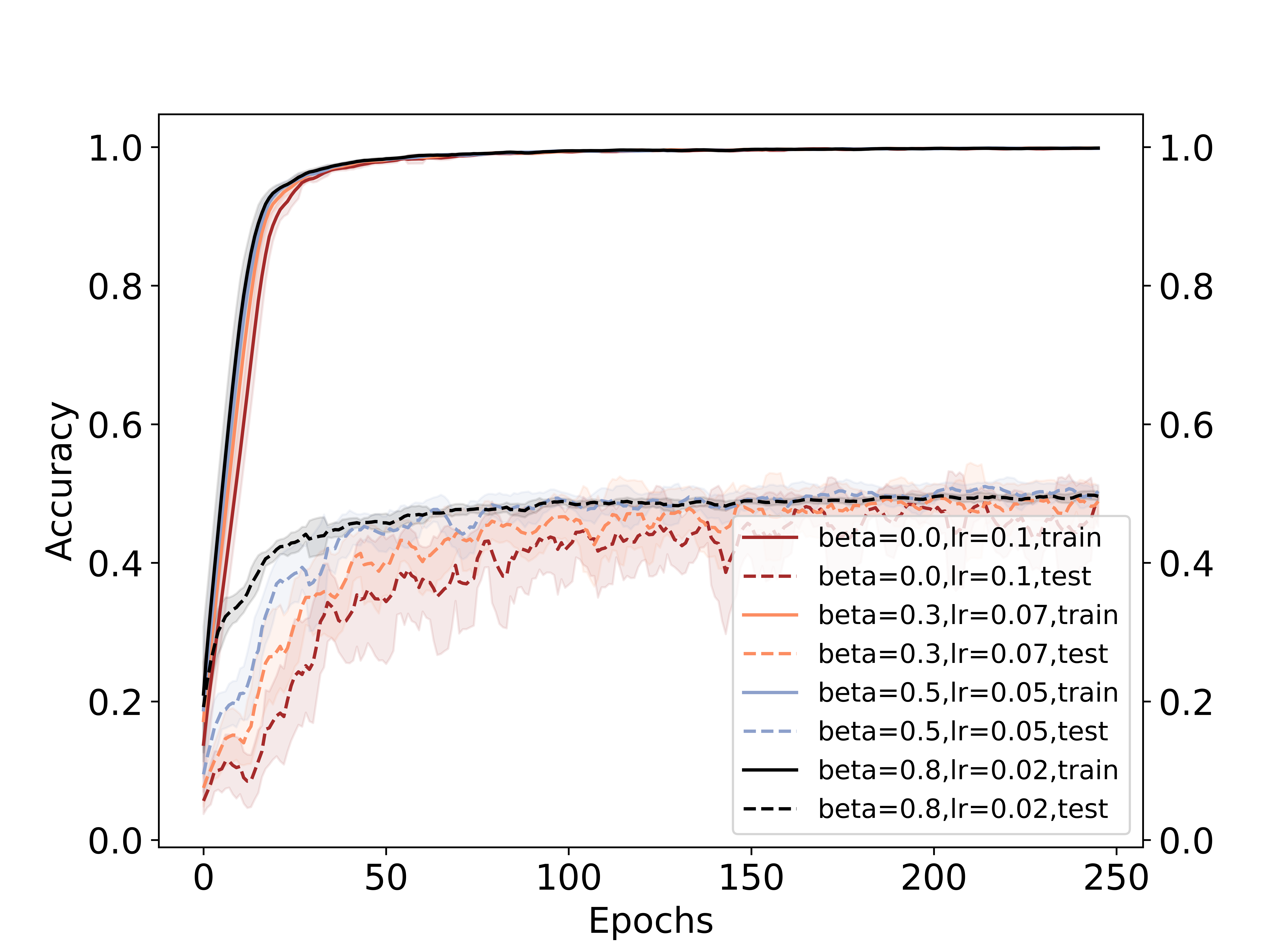}
            \caption[Resnet-50]%
            {{\small ResNet-50}}    
            \label{fig:mean and std of net34}
        \end{subfigure}
        \begin{subfigure}[b]{0.375\textwidth}   
            \centering 
            \includegraphics[width=\textwidth]{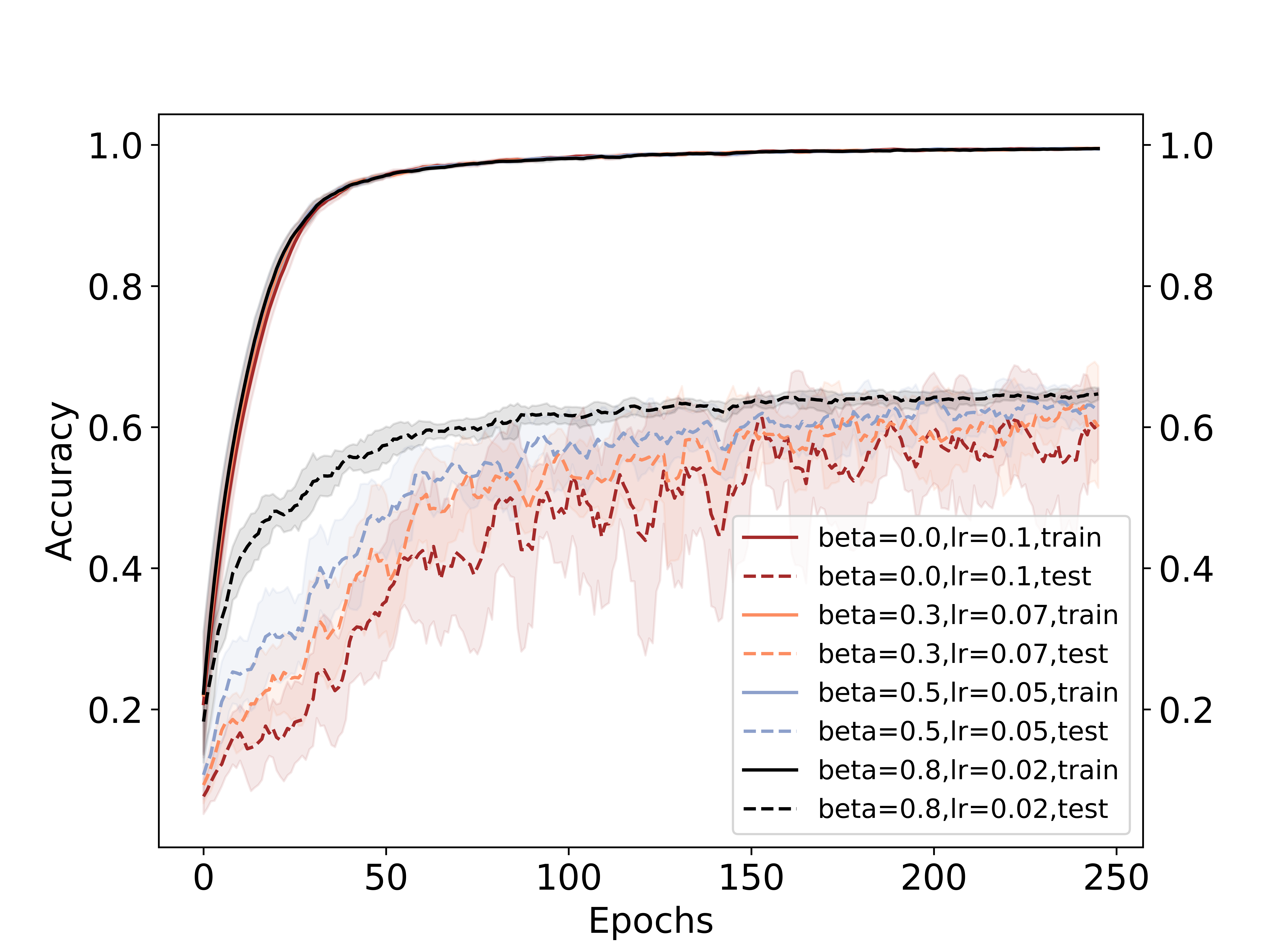}
            \caption[Densenet-121]%
            {{\small DenseNet-121}}    
            \label{fig:mean and std of net44}
        \end{subfigure}
        \caption[ The average and standard deviation of critical parameters ]
         {\small Classification results for CIFAR-100 dataset with various network architectures with combinations of $(h,\beta)$ chosen such that the effective learning rate $\frac{h}{(1-\beta)}$ remains same. In all of the experiments, external regularization like weight-decay, l.r scheduler, dropout,label-smoothing are kept off (except Batch-normalization). The results have been averaged over 3 random seeds having different initializations}.  
        \label{SGD+Mfig}
    \end{figure*}

\subsection{Effect of learning-rate scheduler}

Learning rate schedulers are a common practice in training classification networks hence exploring the effect of IGR and IGR-M in schedulers is important. In the experiment, we train a Resnet-18 and a Resnet-50 network to classify CIFAR-10 dataset to compare the performance of (SGD) and (SGD+M) under the effect of learning rate scheduler. 
     \begin{figure*}[t]
        \centering
        \begin{subfigure}[b]{0.395\textwidth}
            \centering
            \includegraphics[width=\textwidth]{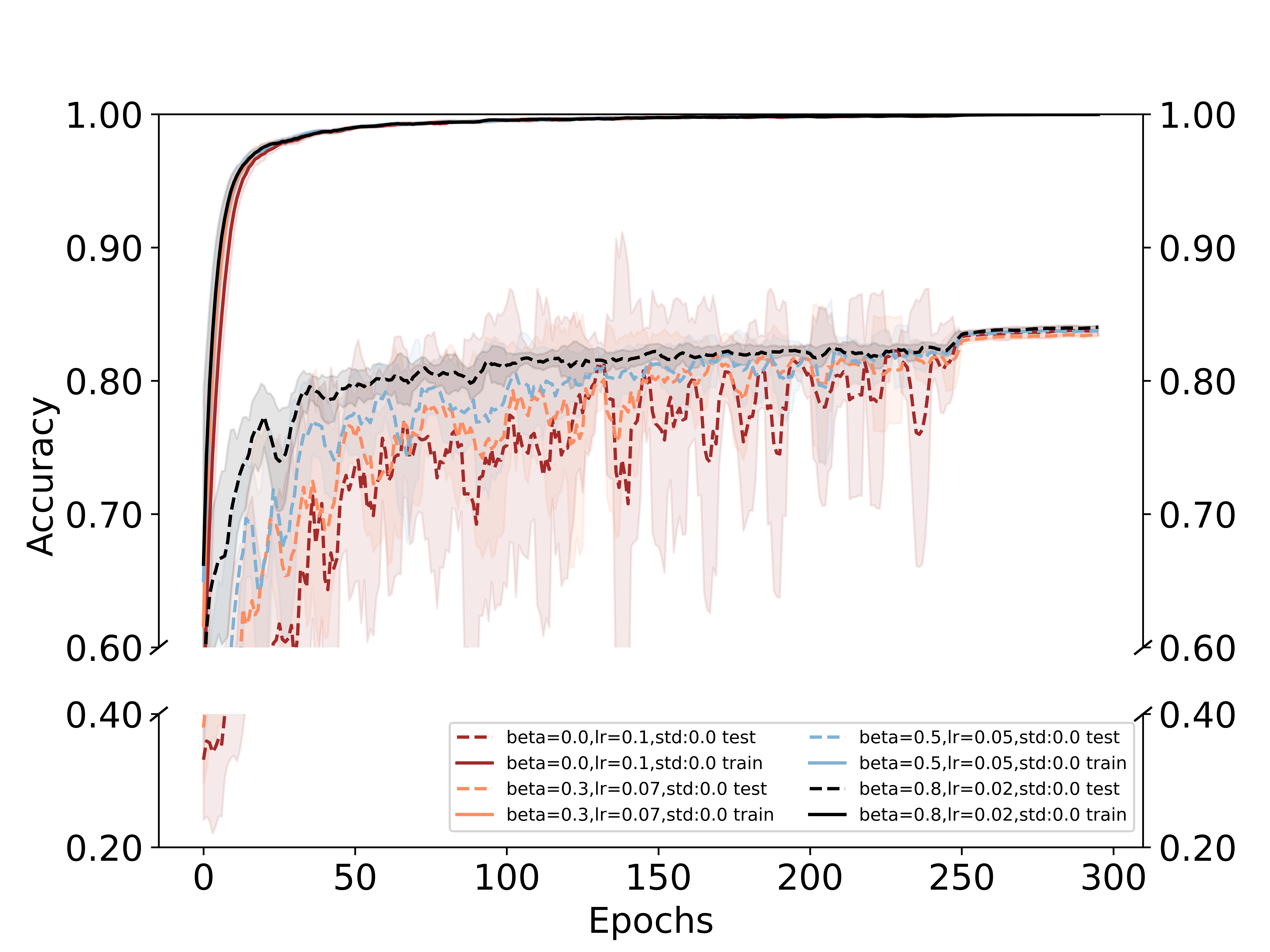}
            \caption[Wideresnet]%
            {{\small Resnet-18 }}    
            \label{lrschd2}
        \end{subfigure}
        \begin{subfigure}[b]{0.395\textwidth}  
            \centering 
            \includegraphics[width=\textwidth]{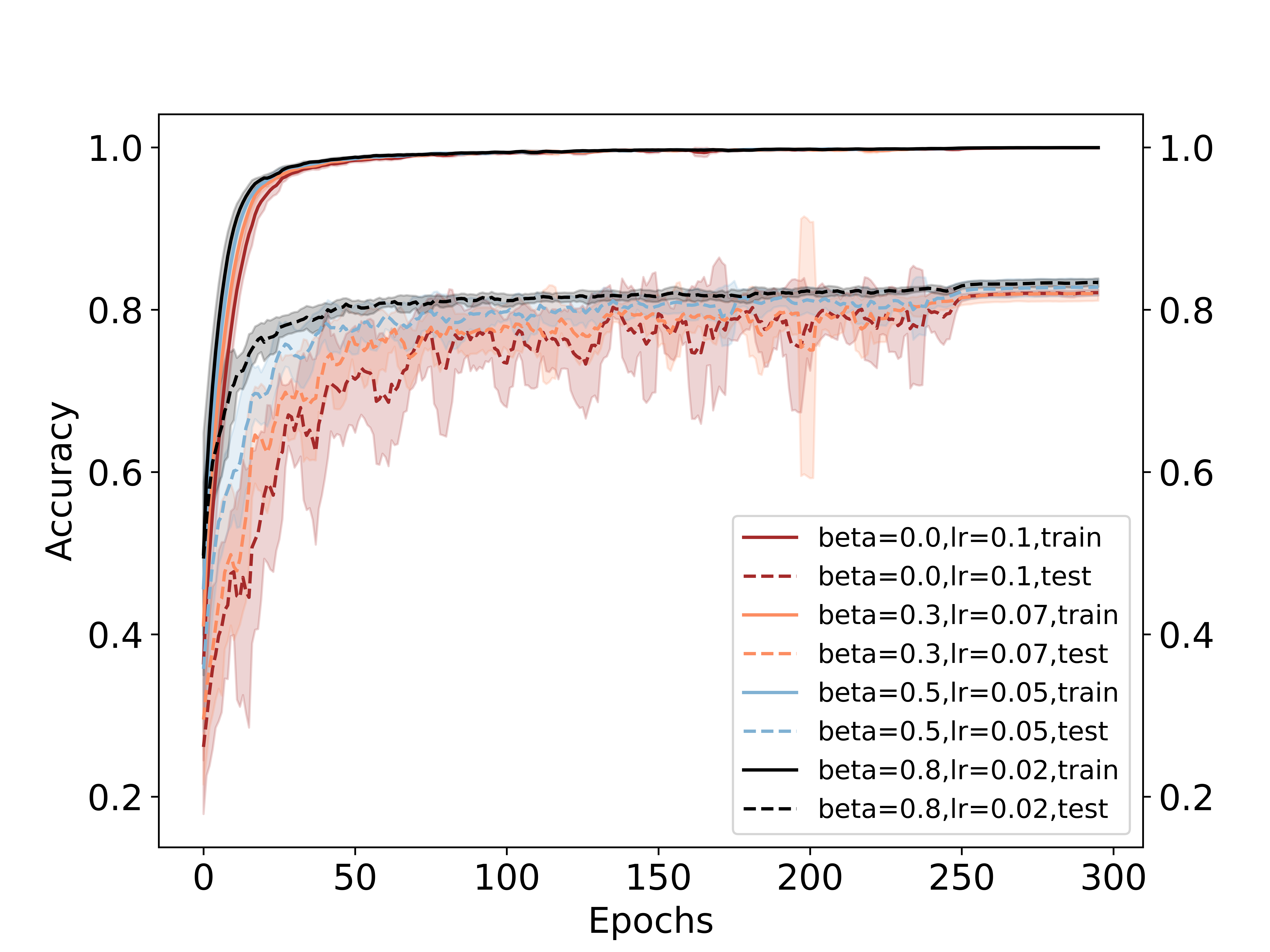}
            \caption[ Resnet-18]%
            {{\small Resnet-50}}    
            \label{fig:mean and std of net24 2}
        \end{subfigure}
        \caption[ The average and standard deviation of critical parameters ]
        {\small Classification result of CIFAR-10 with step scheduler ($\frac{1}{10}$) activated at epoch =250 with various $\beta$ but the same effective learning rate $\frac{h}{(1-\beta)}$.   } 
        \label{fig:lrschd1}
\end{figure*}

We observe that just like our previous experiments comparing (SGD) and (SGD+M), the test accuracy is higher with increasing $\beta$. We attributed this effect due to the stronger implicit regularization for momentum than plain SGD. However, after the effect of scheduler, the learning rate is decreased by a factor of 10. This diminishes the effect of the implicit regularizer for both SGD and SGD+M as $IGR \propto h$. 
However, from empirical observations (Fig-2) the difference in test accuracy of (SGD) and (SGD+M) (near convergence) still exists but may not be in a pronounced way as the initial iterations. We believe this is because during the earlier iterations, the significantly stronger IGR for (SGD+M) guides it's trajectory through flatter sub-manifolds than that of (SGD). The effect is prominent enough that even after scheduler is activated (also near convergence), (SGD+M) still has a slightly higher test accuracy than (SGD).\\

\section{In the stable regime, convergence-rate of (GD+M) is $\frac{1}{(1-\beta)}$ larger than (GD)  using classical convergence analysis}
In this section, we show that when (GD) with a learning-rate $h$ and (GD+M) with an effective learning rate $\frac{h}{(1-\beta)}$ both fall inside the stable regime of (GD),then the convergence-rate of (GD+M) is $\frac{1}{(1-\beta)}$ larger than (GD).

Classical convergence of (GD) and (GD+M) is considered in a locally quadratic surface. On a standard quadratic, the minimization is $\min_{\x} f(\x)= \frac{1}{2} \x^T \A \x -b^T \x +c$, where $\A$ is positive semi-definite matrix with eigen-values in $[\mu,L]$. A simple change of variable would mean doing a minimization of the form $\min_{\x} \frac{1}{2} \x^T \Sigma \x $, where $\Sigma$ contains the eigenvalues of A on the diagonal. Hence $\nabla f(\x) = \Sigma \x$ and $\nabla^2 f(\x) = \Sigma$. Furthermore, the condition number of the objective function is denoted as $\kappa = \frac{L}{\mu}$. 

For Heavy-Ball method, the iterates follow:

\begin{align}
    \x^{k+1} =\x^{k}-h\nabla f(\x^{k}) +\beta (\x^{k}- \x^{k-1})
\end{align}
On a locally quadratic, the iterates roughly follow
\begin{align}
     \x^{k+1} =\x^{k}-h\Sigma \x  +\beta (\x^{k}- \x^{k-1}) = ((1+\beta)\I - h \Sigma)\x^{k} -\beta \x^{k-1}
\end{align}
With slight rearrangement, which could be written as :

\begin{align}
   \begin{bmatrix}
\x^{k+1}\\
\x^{k}
\end{bmatrix} = \begin{bmatrix}
(1+\beta)\I -h\Sigma & -\beta \I \\
\I & \mathbf{0} 
\end{bmatrix} \begin{bmatrix}
\x^{k}\\
\x^{k-1}
\end{bmatrix}
\end{align}

Denoting $\y^{k}=  \begin{bmatrix}
\x^{k+1}\\
\x^{k}
\end{bmatrix}$ and $\To= \begin{bmatrix}
(1+\beta)\I -h\Sigma & -\beta \I \\
\I & \mathbf{0} 
\end{bmatrix} $, the norm of $\|\y^{k} \|_{2} $ is derived as follows:

\begin{align}
    \| \y^{k}\| =  \|\To \y^{k-1}\|=   \|\To^{k} \y^{0}\| \leq  \|\To^{k}\|_{2} \| \y^{0}\| \leq (\rho(\To))^k \kappa(V) \| \y^{0} \|
\end{align}
where $\rho(\To) $ is the spectral radius of $\To$ and $\To$ has an eigen-decomposition $\To=VDV^{-1}$, $\kappa(V)$ being the condition number of $V$. $\To$ is permutation-similar to the block-diagonal matrix $\To = \begin{bmatrix}
\To_{1} & \mathbf{0} &.& . & \mathbf{0}\\
 \mathbf{0} & \To_{2} & .& . & \mathbf{0}\\
. & . & . & .&  .\\
 \mathbf{0} & \mathbf{0} &. &. & \To_{n}\\
\end{bmatrix} $, where $\To_{j} = \begin{bmatrix}
1+\beta-\alpha \lambda_{j} & -\beta\\
1 & 0\\
\end{bmatrix} $ is a $2\times2 $ matrix for $j=1,2..n $. Letting $r_{j}$ denote the  eigen-values for each block matrix $\To_{j} $ and would satisfy \\$    r_{j}= 
\begin{cases}
   \frac{1}{2}((1+\beta -\alpha \lambda_{j})\pm \sqrt{ (1+\beta-h\lambda_{j})^2 -4\beta}),& \text{if } (1+\beta-h\lambda_{j})^2 -4\beta=\Delta_{j} > 0 \\
     \frac{1}{2}((1+\beta -\alpha \lambda_{j}) \pm i \sqrt{|\Delta_{j}|},              & \text{otherwise}
\end{cases} $
where $i=\sqrt{-1}$.
Due to the block-matrix structure of $\To$, the convergence factor $\rho(\To)$ is determined by the largest vectors among all the block matrices $\To_{j}$, i.e, $\rho(\To) =\max _{j} r_{j}= \max r_{1},r_{n}$. 

Now depending upon the 4 conditions $\Delta_{j}\leq 0 \equiv \beta \geq (1-\sqrt{h \lambda_{j}}) $,  $\Delta_{j}> 0 \equiv \beta \leq  (1-\sqrt{h \lambda_{j}}) $ , $|1-\sqrt{h\mu}| < |1-\sqrt{h L}|$ and $|1-\sqrt{h\mu}| >|1-\sqrt{h L}|$, we have four sub-cases to determine $\rho(\To)$:
\begin{enumerate}
    \item If $0< h \leq (\frac{2}{\sqrt{L}+\sqrt{\mu}})^2 $ and $\beta \geq  (1-\sqrt{h\mu})^2$ 
    \item  If $0< h \leq (\frac{2}{\sqrt{L}+\sqrt{\mu}})^2 $ and $\beta <  (1-\sqrt{h\mu})^2$ 
    \item $h >(\frac{2}{\sqrt{L}+\sqrt{\mu}})^2 $ and $\beta \geq (\sqrt{h L}-1)^2 $
    \item  $h >(\frac{2}{\sqrt{L}+\sqrt{\mu}})^2 $ and $\beta < (\sqrt{h L}-1)^2 $
\end{enumerate}

For a small $h$ and fixed $\beta$, satisfies condition-2 and the effective learning rate lies in the stability regime of GD. 
Under this particular condition (2), we have $\Delta_{1}>0 $, hence the spectral radius  $\rho(\To)$ becomes (by taking the larger $r_{j}$) :

\begin{align}
   & \rho^{(GD+M)} = \frac{1}{2}(1+\beta -h \mu + \sqrt{(1+\beta-h\mu)^2 -4\beta}) \quad  \text{[considering the larger term]}\\
   & = \frac{1}{2}(1+\beta -h \mu + \sqrt{(1-\beta)^2-2h\mu(1+\beta) +h^2\mu^2 }) \\
   &  = \frac{1}{2}(1+\beta -h \mu + (1-\beta)(\underbrace{\sqrt{1-\frac{2h\mu(1+\beta) +h^2\mu^2}{(1-\beta)^2} }}_{1-\frac{1}{2}\frac{2h\mu(1+\beta)}{(1-\beta)^2} +O(h^2) }  -1 ) + (1-\beta))\\
   & \approx \frac{1}{2} (1+\beta -h \mu -\frac{h\mu(1+\beta)}{(1-\beta)} +  (1-\beta) ) \quad  \text{[small $h$ approximation]}\\ 
   & = 1- \frac{h \mu}{(1-\beta)}
\end{align}

Similarly, for (GD) with learning-rate $\Tilde{h}$ minimizing a locally quadratic function, using the classical convergence approach, we have $\| \x^{k} \| \leq \rho^{k}_{\Tilde{h}} \| \x^{0} \| $ where $ \rho_{\Tilde{h}} =\max (|1-\Tilde{h}\mu|,|1-\Tilde{h}L| )$. 
Hence for a small enough $h$ i.e,( $0< \Tilde{h} \leq \frac{2}{L+\mu} $), we have for the convergence rate for GD to be :
\begin{align}
    & \rho^{GD} = 1-\Tilde{h}\mu 
\end{align}

Putting $\Tilde{h} = \frac{h}{(1-\beta)}$, we see that $\rho^{(GD+M)} \approx \rho^{(GD)} $. Which means if we use a learning rate $\frac{1}{(1-\beta)}$ times larger for GD, it will match the convergence rate of (GD+M). 

Equivalently under the same learning rate for (GD) and (GD+M) (say $h$), the convergence rate of (GD+M) is $\frac{1}{(1-\beta)}$ times larger than that of (GD),i.e, $\rho^{(GD+M)} \approx \frac{1}{(1-\beta)} \rho^{(GD)} $.





\section{Role of variance in mini-batch gradients in finding better minima}
Losses of deep neural network are usually highly non-convex containing a lot of local minima. A good optimizer should have the ability of escaping local and bad (i.e., sharp) minimizers to settle for a good/flat minimum.  In SGD, the mini-batch gradient can be thought of as a noisy version of the full-batch gradient: $\nabla E_{i}(x) = \nabla E(x)  +\eta_i$. So, when an optimizer is stuck in a valley having a bad local minima, the randomness in the noisy gradient $\nabla E_{i}(x)$ provides a possibility of  \textbf{escaping} the valley (having a bad local minima). Very recently, this intuition has been mathematically formalized by (Ibayashi and Imaizumi (2021)). In their Theorem 2, the authors showed that the escape efficiency (reciprocal of mean exit time) of SGD  is $ \propto \exp({-\frac{B}{h}\Delta E \lambda_{max}^{-\frac{1}{2}}}) $,  where $B$, $h$, $\Delta E$ and $\lambda_{max}$ denote batch-size, learning rate, depth of minima and the largest eigenvalue of the Hessian,  respectively. In short, a smaller batch-size (B) and a larger learning rate are crucial to escaping bad local minima.

\section{IGR-M in 2D model with non-linear (sigmoid) activation}
Beyond the linear case in Section-4.1 of the manuscript, now we consider a 2D nonlinear model that has a Sigmoid activation function to explore the effect of IGR-M. The loss function $E$ is minimized using two learnable parameters $(w_{1},w_{2})$ but with a sigmoid layer in-between. Here the optimization problem is as follows:
\begin{align*}
    (\hat{w_{1}},\hat{w_{2}}) = \argmin_{w_{1},w_{2}} \frac{1}{2} (y-w_1 \sigma(w_{2}x))^2 \equiv \argmin_{w_{1},w_{2}} \frac{1}{2} \left(y-\frac{w_{1}}{1+e^{-w_{2}x}}\right)^2 := E(w_{1},w_{2})
\end{align*}
where $\sigma$ is the Sigmoid activation function.
The norm of the gradient has the following expression in this case:
\begin{align*}
    \|\nabla E \|^2 = \left|\frac{\partial E}{\partial w_{1}} \right|^2 +  \left|\frac{\partial E}{\partial w_{2}} \right|^2
    = \left(\frac{1}{(1+e^{-w_{2}x})^2} + \frac{w_{1}^2x^2e^{-2w_{2}x}}{(1+e^{-w_{2}x})^4}\right)\left(y-\frac{w_{1}}{1+e^{-w_2x}}\right)^2.
\end{align*}

\begin{wrapfigure}{r}{0.6\textwidth}
  \begin{center}
    \includegraphics[width=0.6\textwidth]{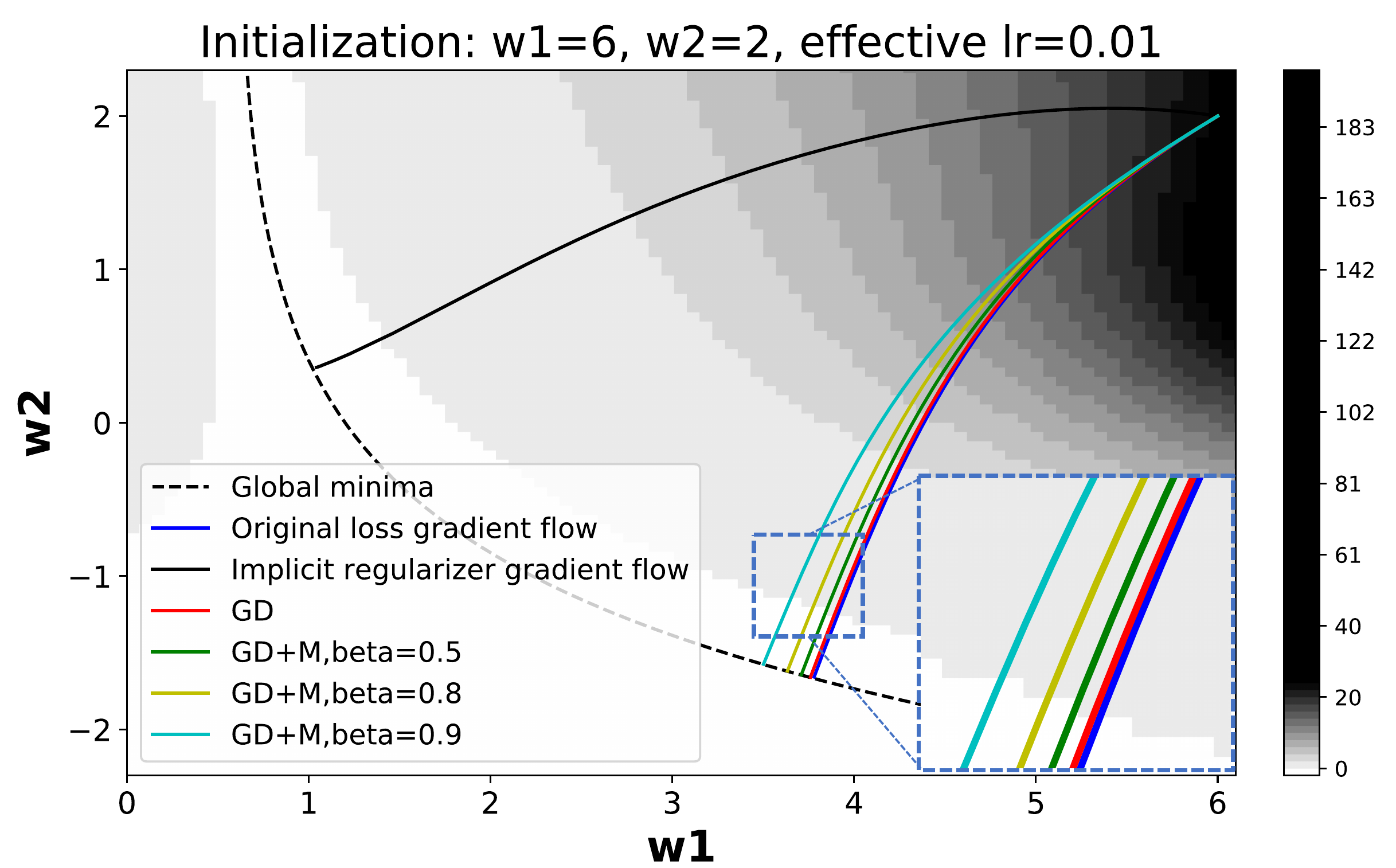}
  \end{center}
  \caption{Trajectories for (GD) and (GD+M) for various $\beta$ but with the same effective learning rate $\frac{h}{(1-\beta)}$. With increasing $\beta$, the trajectory becomes closer to the gradient flow of the implicit regularizer (solid back line), hence supporting our theory. The background color denotes the magnitude $\|\nabla E \|_{2}^2$ }
  \label{fig:IGR-nonlinear}
\end{wrapfigure}
The dashed black curve plots global minima given by the equation $w_{2} = -\frac{\log(\frac{w_{1}}{y}-1)}{x}$. Unlike the linear case, (where the IGR was proportional to the norm of the weights $w_{1}$ and $w_{2}$), here the IGR $\| \nabla E\|^2$ has a more complicated level set (Figure \ref{fig:IGR-nonlinear}. So, to help understand the effect of IGR-M, we plot two reference curves, one is the dark blue curve that represents the gradient flow for the original loss function,
given as 
\begin{align*}
    \x'(t) = -\nabla E(\x(t)).
\end{align*}
where $\x = [w_{1},w_{2}]^T$.
The other is the solid black curve that shows the gradient flow for implicit regularizer $\|\nabla E\|^2$ given as:
\begin{align*}
    \x'(t) = -\nabla \| \nabla E(\x(t))\|_{2}^2.
\end{align*}

A method with a stronger IGR would have a trajectory closer to the solid black curve.
So, we plot the trajectories of (GD) ($\beta=0$) and (GD+M) with $\beta=0.5,0.8$, and $0.9$, with the same initialization $(w_1=6, w_2=2)$. The effective learning-rate is kept the same for all the trajectories which equals the learning rate for GD, i.e., $\frac{h}{1-\beta}= 0.01$. We see how the trajectory for (GD+M) is closer to the gradient flow for implicit regularizer (the solid black curve) than that of (GD).
More explicitly, we observe that all the trajectories converge to the curve of global minima. However, with a larger $\beta$, the trajectory becomes closer to the gradient flow minimizing $\|\nabla E\|^2$ (the solid black curve). This observation agrees with our theorem which states the modified loss is a weighted combination of the original loss $E$ and implicit regularizer $\|\nabla E\|^2$, and larger $\beta$ leads to a larger weight for the regularizer $\|\nabla E\|^2$, hence making it closer to the solid black curve.

\section{Future directions}

IGR, although a great tool for examining generalization, relies on low-order Taylor approximations that works well under small learning rates. In addition, the current analysis is based on fixed values of $\beta$ while letting $h\rightarrow 0$. In practice, to better guide the choice of hyper-parameters, a bound that is asymptotic in both $h$ and $\beta$  ($ h\rightarrow 0, \beta\rightarrow 1$) might be more helpful. We leave it as future work. 

\bibliography{iclrappendix}
\bibliographystyle{iclrappendix}